\newcommand{\commentn}[1]{}
\newcommand{\eqdef}{\stackrel{\rm def}{=}}
\def\qed{~\hfill{$\Box$}}
\newtheorem{observation}{Observation}
\newcommand{\tuple}[1]{\ensuremath{\langle #1 \rangle}}
\newcommand{\signature}{\tuple{At, Lb}}
\newcommand{\set}[1]{\ensuremath{\{#1\}}}
\newcommand{\prefix}{\ensuremath{Lb\cup\set{1}}}
\newcommand{\nat}{\mathbb{N}}
\newcommand{\XS}[2]{\ensuremath{\mathcal{X}_{#1}^{#2}}}
\newcommand{\XLb}{\XS{Lb}{}}
\newcommand{\TS}[2]{\ensuremath{\mathcal{T}_{#1}^{#2}}}
\newcommand{\TLb}{\TS{Lb}{}}
\newcommand{\VS}[2]{\ensuremath{\mathcal{V}_{#1}^{#2}}}
\newcommand{\VLb}{\VS{Lb}{}}
\newcommand{\NS}[2]{\ensuremath{\mathcal{U}_{#1}^{#2}}}
\newcommand{\NLb}{\NS{Lb}{}}
\newcommand{\In}{{\ensuremath{\mathcal{I}}}}
\newcommand{\Fi}{\ensuremath{\mathcal{C}}}
\def\bottomI{\ensuremath{\mathbf{0}}}
\def\topI{\ensuremath{\mathbf{1}}}
\newcommand{\oof}[1]{|\!|\!|\!|{#1}|\!|\!|\!|}
\DeclareDocumentCommand\impl{o m m}{\ensuremath{	\IfNoValueTF{#1}{}{{#1}:}{#3}\leftarrow{#2}}}
\newcommand{\tp}[1]{T_\Pi\!\uparrow^{\ #1}(\bottomI)}
\def\Not{\hbox{\em not} \ }
\newcommand{\cT}[2]{PT_{#1}^{#2}}
\newcounter{programcount}
\newcommand{\newprogr}{\refstepcounter{programcount}\ensuremath{\Pi_{\arabic{programcount}}}}
\newcommand{\progrref}[1]{\ensuremath{\Pi_{#1}}}
  \title{An Algebra of Causal Chains\thanks{This research was partially supported by Spanish MEC project TIN2009-14562-C05-04 and Xunta program INCITE 2011.}}
  \titlerunning{An Algebra of Causal Chains}
  \author{Pedro Cabalar and Jorge Fandinno}
  \authorrunning{P.~Cabalar and J.~Fandi\~no} 
  \institute{	Department of Computer Science\\
				University of Corunna, SPAIN\\
				\email{\{cabalar, jorge.fandino\}@udc.es}
			}
\begin{document}

\maketitle

  \begin{abstract}
  In this work we propose a multi-valued extension of logic programs under the stable models semantics where each true atom in a model is associated with a set of justifications, in a similar spirit than a set of proof trees.
  The main contribution of this paper is that we capture justifications into an algebra of truth values with three internal operations: an addition `$+$' representing  alternative justifications for a formula, a commutative product `$*$' representing joint interaction of causes and a non-commutative product `$\cdot$' acting as a concatenation or proof constructor.
  Using this multi-valued semantics, we obtain a one-to-one correspondence between the syntactic proof tree of a standard (non-causal) logic program and the interpretation of each true atom in a model.
  Furthermore, thanks to this algebraic characterization we can detect semantic properties like redundancy and relevance of the obtained justifications.  
  We also identify a lattice-based characterization of this algebra, defining a direct consequences operator, proving its continuity and that its least fix point can be computed after a finite number of iterations. Finally, we define the concept of \emph{causal stable model} by introducing an analogous transformation to Gelfond and Lifschitz's program reduct.
  \end{abstract}


\section{Introduction}
\label{sec:intro}

A frequent informal way of explaining the effect of default negation in an introductory class on semantics in logic programming (LP) is that a literal of the form `$\Not p$' should be read as ``there is no way to derive $p$.'' Although this idea seems quite intuitive, it is actually using a concept outside the discourse of any of the existing LP semantics: the \emph{ways to derive} $p$. To explore this idea, \cite{Cab12} introduced the so-called \emph{causal logic programs}.
The semantics was an extension of stable models~\cite{GL88} relying on the idea of ``justification'' or ``proof''.
Any true atom, in a standard (non-causal) stable model needs to be justified. In a \emph{causal stable model}, the truth value of each true atom captures these possible justifications, called \emph{causes}.
Let us see an example to illustrate this.

\begin{example}\label{ex:boat}
Suppose we have a row boat with two rowers, one at each side of the boat, port and starboard. The boat moves forward $fwd$ if both rowers strike at a time. On the other hand, if we have a following wind, the boat moves forward anyway.\qed
\end{example}

Suppose now that we have indeed that both rowers stroke at a time when we additionally had a following wind. A possible encoding for this example could be the set of rules \newprogr\label{prg:boat}:
\begin{gather*}
p: port \hspace{10pt} s: starb \hspace{10pt} w: fwind\\
fwd \leftarrow port \wedge starb \hspace{20pt} fwd \leftarrow fwind
\end{gather*}
\noindent 
In the only causal stable model of this program, atom $fwd$ was justified by two alternative and independent causes. On the one hand, cause $\{p,s\}$ representing the joint interaction of $port$ and $starb$. On the other hand, cause $\{w\}$ inherited from $fwind$.
We label rules (in the above program only atoms) that we want to be reflected in causes. Unlabelled $fwd$ rules are just ignored when reflecting causal information.
For instance, if we decide to keep track of the application of these rules, we would handle instead a program \newprogr\label{prg:boat.label} obtained just by labelling these two rules in \progrref{\ref{prg:boat}} as follows:
\begin{align}
a: fwd &\leftarrow port \wedge starb \label{f1}\\
b: fwd &\leftarrow fwind \label{f2}
\end{align}
The two alternative justifications for atom $fwd$ become the pair of causes $\set{p,s} \cdot a$ and $\set{w} \cdot b$. The informal reading of $\set{p,s} \cdot a$ is that ``the joint interaction of $\set{p}$ and $\set{s}$, the cause $\set{p,s}$, is necessary to apply rule $a$.''
From a graphical point of view, we can represent \emph{causes} as proof trees.
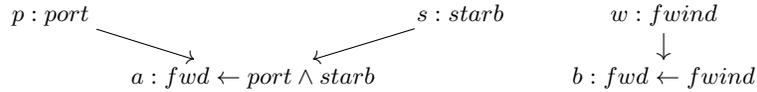
\begin{figure}
\[
\xymatrix @-5mm {
{p : port}	\ar[dr]	&			& {s : starb}	\ar[dl]\\
			& {a : \impl{port \wedge starb}{fwd}} &
}
\hspace{20pt}
\xymatrix @-5mm {
{w:fwind} \ar[d]		\\
{b : \impl{fwind}{fwd}}
}
\]
\caption{Proof trees justifying atom $fwd$ in the program \progrref{\ref{prg:boat.label}}}
\label{fig:tree}
\end{figure}

\noindent In this paper, we show that causes can be embedded in an algebra
with three internal operations: an addition `$+$' representing alternative justifications for a formula, a commutative product `$*$' representing joint interaction of causes
(in a similar spirit to the `$+$' in~\cite{Art01})
and a non-commutative product `$\cdot$' acting as a concatenation or rule application.
Using these operations, we can see that justification for $fwd$ would correspond now to the value
$((p*s) \cdot a) + (w \cdot b)$
which means that $fwd$ is justified by the two alternative \emph{causes}, $(p*s) \cdot a$ and
$(w \cdot b)$. The former refers to the application of rule $a$ to the join interaction of $p$ and $s$. Similarly, the later refers to the application of rule $b$ to $w$.
From a graphical point of view, each cause corresponds to one of proof trees in the Figure~\ref{fig:tree}, the right hand side operator of application corresponds to the head whereas the left hand side operator corresponds to the product of its children.

The rest of the paper is organised as follows. Section~\ref{sec:cterms} describes the algebra with these three operations and a quite natural ordering relation on causes. The next section studies the semantics for positive logic programs and shows the correspondence between the syntactic proof tree of a standard (non-causal) logic program and the interpretation of each atom in a \emph{causal model}. Section~\ref{sec:stable} introduces default negation and stable models. Finally, Section~\ref{sec:conc} concludes the paper.

\section{Algebra of causal values}
\label{sec:cterms}

As we have introduced, our set of \emph{causal values} will constitute an algebra with three internal operations: addition `$+$' representing alternative causes, product `$*$' representing joint interaction between causes and rule application `$\cdot$'. We define now \emph{causal terms}, the syntactic counterpart of (causal) values, just as combinations of these three operations over labels (events).

\begin{definition}[Causal term]
A \emph{causal term}, $t$, over a set of labels $Lb$ is recursively defined as one of the following expressions:
\[
t ::= l \ \ |Ê\ \ \prod_{t_i \in S} t_i \ \ |Ê\ \ \sum_{t_i \in S} t_i \ \ |Ê\ \ t_1 \cdot t_2
\]
\noindent where $l$ is a label $l \in Lb$, $t_1, t_2$ are in their turn causal terms and $S$ is a (possibly empty or possibly infinite) set of causal terms. The set of causal terms over $Lb$ is denoted by $\TLb$.\qed
\end{definition}

As we can see, infinite products and sums are allowed whereas a term may only contain a finite number of concatenation applications.
Constants $0$ and $1$ will be shorthands for the empty sum $\sum_{t \in \emptyset} t$ and the empty product $\prod_{t \in \emptyset}t$, respectively.

We adopt the following notation. To avoid an excessive use of parentheses, we assume that `$\cdot$' has the highest priority, followed by `$*$' and `$+$' as usual, and we further note that the three operations will be associative. When clear from the context, we will sometimes remove `$\cdot$' so that, for instance, the term $l_1 l_2$ stands for $l_1 \cdot l_2$. As we will see, two (syntactically) different causal terms may correspond to the same causal value. However, we will impose Unique Names Assumption (UNA) for labels, that is, $l \neq l'$ for any two (syntactically) different labels $l,l' \in Lb$, and similarly $l \neq 0$ and $l \neq 1$ for any label $l$.

To fix properties of our algebra we recall that addition `$+$' represents a set of alternative causes and product `$*$' a set of causes that are jointly used. Thus, since both represent sets, they are associative, commutative and idempotent.
Contrary, although associative, application `$\cdot$' is not commutative. Note that the right hand side operator represents the applied rule and left hand one represents a cause that is necessary to apply it, therefore they are clearly not interchangeable.
We can note another interesting property: application `$\cdot$' distributes over both addition `$+$' and product `$*$'.
To illustrate this idea, consider the following variation of our example. Suppose now that the boat also leaves a wake behind when it moves forward. Let \newprogr\label{prg:boat.wake} be the set of rules \progrref{\ref{prg:boat}} plus the rule
$k:wake\leftarrow fwd$ reflecting this new assumption.
As we saw, $fwd$ is justified by $p*s+w$ and thus $wake$ will be justified by applying rule $k:wake\leftarrow fwd$ to it, i.e.the value $(p*s+w) \cdot k$. We can also see that there are two alternative causes justifying $wake$, graphically represented in the Figure~\ref{fig:tree.wake}.
\begin{figure}
\[
\xymatrix @-5mm {
{p : port}	\ar[dr]	&			& {s : starb}	\ar[dl]\\
			& {\impl{port \wedge starb}{fwd}} \ar[d]  &\\
			& {k : \impl{fwd}{wake}} &
}
\hspace{20pt}
\xymatrix @-5mm {
{w:fwind} \ar[d]		\\
{\impl{fwind}{fwd}} \ar[d]\\
{k : \impl{fwd}{wake}}
}
\]
\caption{Proof trees pontificating atom $fwd$ in the program \progrref{\ref{prg:boat.wake}}}
\label{fig:tree.wake}
\end{figure}
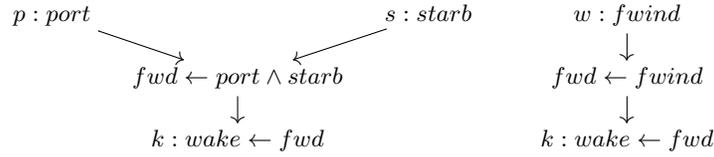
The term that corresponds which this graphical representation is
$(p*s) \cdot k + w \cdot k = (p*s+w) \cdot k$.
Moreover, application `$\cdot$' also distributes over product `$*$' and
$(p*s) \cdot k + w \cdot k$ is equivalent to
$(p \cdot k) * (s \cdot k) + (w \cdot k)$. Intuitively, if the joint iteration of $p$ and $s$ is necessary to apply $k$ then both $p$ and $s$ are also necessary to apply it, and conversely.
Note that each chain of applications , $(p \cdot k)$, $(s \cdot k)$ and $(w \cdot k)$ corresponds to a path in one of the trees in the Figure~\ref{fig:tree.wake}. Causes can be seen as sets (products) of paths (causal chains).

\begin{definition}[Causal Chain]
A \emph{causal chain} $x$ over a set of labels $Lb$ is a sequence $x = l_1 \cdot l_2 \cdot \dotsc \cdot l_n$, or simply $l_1l_2 \dots l_n$, with length $|x| = n>0$ and $l_i \in Lb$. \qed
\end{definition}

We denote \XLb\ to stand for the set of causal chains over $Lb$ and will use letters $x, y, z$ to denote elements from that set. It suffices to have a non-empty set of labels, say $Lb=\set{a}$, to get an infinite set of chains $\XLb=\{a, aa, aaa, \dots\}$, although all of them have a finite length.
It is easy to see that, by an exhaustive application of distributivity, we can ``shift'' inside all occurrences of the application operator so that it only occurs in the scope of other application operators. A causal term obtained in this way is a \emph{normal causal term}.

\begin{definition}[Normal causal term]
A \emph{causal term}, $t$, over a set of labels $Lb$ is recursively defined as one of the following expressions:
\[
t ::= x \mid \prod_{t_i \in S} t_i \mid \sum_{t_i \in S} t_i 
\]
\noindent where $x \in \XLb$ is a causal chain over $Lb$ and $S$ is a (possibly empty or possibly infinite) set of normal causal terms. The set of causal terms over $Lb$ is denoted by $\NLb$.\qed
\end{definition}

\begin{proposition}\label{prof:term.normalize}
Every causal term $t$ can be normalized, i.e. written as an equivalent normal causal term $u$.\qed
\end{proposition}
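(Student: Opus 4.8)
The plan is to argue by structural induction on the causal term $t$, following the grammar in the definition of causal term. The base case $t = l$ with $l \in Lb$ is immediate, since $l$ is a causal chain of length one and hence already a normal causal term. If $t = \prod_{t_i \in S} t_i$ or $t = \sum_{t_i \in S} t_i$, the induction hypothesis provides, for each $i$, an equivalent normal term $u_i$; then $\prod_i u_i$ (resp. $\sum_i u_i$) is again a normal causal term directly by the grammar of normal causal terms, and it is equivalent to $t$ because the algebra operations respect equivalence. So the only genuine work lies in the concatenation case $t = t_1 \cdot t_2$.

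For $t = t_1 \cdot t_2$ the induction hypothesis yields normal terms $u_1 \equiv t_1$ and $u_2 \equiv t_2$, so it suffices to establish the auxiliary claim: \emph{if $u_1$ and $u_2$ are normal causal terms, then $u_1 \cdot u_2$ is equivalent to some normal causal term.} I would prove this by a nested induction, the outer one on the (well-founded, though possibly infinitely branching) immediate-subterm order of $u_1$ and the inner one on that of $u_2$. If $u_1 = \sum_i w_i$ (resp. $u_1 = \prod_i w_i$), left-distributivity of `$\cdot$' over `$+$' (resp. over `$*$') gives $u_1 \cdot u_2 \equiv \sum_i(w_i \cdot u_2)$ (resp. $\prod_i(w_i \cdot u_2)$); each $w_i$ is a strictly smaller normal term, so the outer hypothesis normalizes $w_i \cdot u_2$, and a sum (resp. product) of normal terms is normal. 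If $u_1 = x$ is a causal chain, I then analyse $u_2$: when $u_2 = \sum_j v_j$ or $u_2 = \prod_j v_j$, right-distributivity of `$\cdot$' reduces the problem to normalizing $x \cdot v_j$ for each $j$, where $v_j$ is structurally smaller, so the inner hypothesis applies; and when $u_2 = y$ is itself a causal chain, $x \cdot y$ is, by associativity of `$\cdot$', just the concatenation of the two sequences, i.e. a single causal chain, hence normal. This exhausts all cases, proving the auxiliary claim and with it the concatenation case of the main induction.

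The step I expect to be most delicate is justifying well-foundedness of these inductions in the presence of \emph{infinite} sums and products. The distributivity laws must be invoked in their infinitary form, splitting a single `$\cdot$' over an infinite index set $S$, so one should state explicitly that the algebra validates infinitary distributivity; and one must rely on the fact that the "is an immediate component of" relation on (normal) terms is well-founded irrespective of the branching at $\prod$ and $\sum$ nodes, which is what makes the outer/inner structural inductions above legitimate. A secondary point worth checking is the remark, made after the definition of causal term, that a term contains only finitely many occurrences of `$\cdot$': this keeps the chains produced by concatenating two chains finite and guarantees that the informal "shifting inward" of the application operator terminates, so I would set up the main induction so that this finiteness is maintained as an invariant rather than re-derived. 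Once these bookkeeping issues are in place, the remaining verifications — that every rewriting step preserves equivalence — are routine applications of associativity of `$\cdot$' and of distributivity of `$\cdot$' over `$+$' and over `$*$'.
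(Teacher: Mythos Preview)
Your strategy matches the paper's one-line justification (``exhaustive application of distributivity''), and the overall induction scheme is the right idea. There is, however, a genuine gap in the outer-induction case $u_1 = \prod_i w_i$. You appeal to $(\prod_i w_i)\cdot u_2 \equiv \prod_i (w_i\cdot u_2)$, but product distributivity of `$\cdot$' is stated in the paper only for \emph{causes} (Figure~\ref{fig:appl} and the note beneath it), and it does fail for arbitrary terms. Take $u_1=(a+b)*c$ and $u_2=d+e$: direct normalisation gives
\[
u_1\cdot u_2 \ =\ ad{*}cd \;+\; bd{*}cd \;+\; ae{*}ce \;+\; be{*}ce,
\]
whereas your rewrite produces $\big((a+b)\cdot u_2\big)*\big(c\cdot u_2\big)=(ad+ae+bd+be)*(cd+ce)$, which expands to eight addends including, e.g., $ad*ce$, a term absent from the correct value. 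So this step is not sound as written.

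The repair is to push `$\cdot$' through `$+$' \emph{before} touching `$*$'. Addition distributivity of `$\cdot$' is unrestricted, so first rewrite each normal $u_k$ (using the lattice distributivity of `$*$' over `$+$') as a sum of products of chains, then distribute `$\cdot$' over the two outer sums to obtain $\sum_{i,j} C_i\cdot D_j$ with every $C_i,D_j$ a cause; only at that point is product distributivity of `$\cdot$' applicable, turning each $C_i\cdot D_j$ into a product of chains. Your inner induction (the case $u_1=x$ a chain) is in fact fine: a single chain is already a cause, and the identity $x\cdot(u*v)=(x\cdot u)*(x\cdot v)$ for arbitrary $u,v$ is then derivable from the cause-level law together with addition distributivity. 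It is only the outer product case that needs this reordering.
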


In the same way as application `$\cdot$' distributes over addition `$+$' and product `$*$', the latter, in their turn, also distributes over addition `$+$'.
Consider a new variation of our example to illustrate this fact.
Suppose that we have now two port rowers that can strike, encoded as the set of rules \newprogr\label{prg:boat.2.rowers}:
\begin{gather*}
p_1: port_1 \hspace{10pt}
p_2: port_2 \hspace{10pt}
s: starb \hspace{10pt}\\
port \leftarrow port_1 \hspace{20pt}
port \leftarrow port_2 \hspace{20pt}
fwd \leftarrow port \wedge starb
\end{gather*}
We can see that, in the only causal stable model of this program, atom $port$ was justified by two alternative, and independent causes, $p_1$ and $p_2$, and after applying unlabelled rules to them, the resulting value  assigned to $fwd$ is  $(p_1+p_2)*s$. It is also clear that there are two alternative causes justifying $fwd$: the result from combining the starboard rower strike with each of the port rower strikes, $p_1*s$ and $p_2*s$. That is, causal terms $(p_1+p_2)*s$ and $p_1*s+p_2*s$ are equivalent.

Furthermore, as we introduce above, causes can be ordered by a notion of ``strength'' of justification. For instance, in our example, $fwd$ is justified by two independent causes, $p*s+w$ while $fwind$ is only justified by $w$.
If we consider the program \newprogr\label{prg:boat.1.cause} obtained by removing the fact $w:fwind$ from \progrref{\ref{prg:boat}} then $fwd$ keeps being justified by $p*s$ but $fwind$ becomes false. That is, $fwd$ is ``more strongly justified'' than $fwind$ in \progrref{\ref{prg:boat}}, written $w \leq p*s+w$. Similarly, $p*s \leq p*s+w$.
Note also that, in this program \progrref{\ref{prg:boat.1.cause}}, $fwd$ needs the joint interaction of $p$ and $s$ to be justified but $port$ and $starb$ only need $p$ and $s$, respectively. That is, $p$ is ``more strongly justified'' than $p*s$, written $p*s \leq p$. Similarly, $p*s \leq s$.
We can also see that in program \progrref{\ref{prg:boat.label}} which labels rules for $fwd$, one of the alternative causes for $fwd$ is $w \cdot b$ and this is ``less strongly justified'' than $w$,
i.e. $w \cdot b \leq w$ since, from a similar reasoning, $w \cdot b$ needs the application of $b$ to $w$ when $w$ only requires itself.
In general, we will see that $a \cdot b \leq a * b \leq X \leq a + b$ where $X$ can be either $a$ or $b$.
We formalize this order relation starting for causal chains. Notice that a causal chain $x = l_1 l_2 \dotsc l_n$ can be alternatively characterized as a partial function from naturals to labels $x : \nat \longrightarrow Lb$ where $x(i) = l_i$ for all $i \leq n$ and undefined for $i>n$. Using this characterisation, we can define the following partial order among causal chains:

\begin{samepage}
\begin{definition}[Chain subsumption]
Given two causal chains $x$ and $y \in \XLb$, we say that $y$ \emph{subsumes} $x$, written $x \leq y$, if and only if there exists a strictly increasing function $\delta: \nat \longrightarrow \nat$ such that for each $i \in \nat$ with $y(i)$ defined,
$x \big( \delta(i) \big) = y(i)$. \qed
\end{definition}
\end{samepage}

\begin{proposition}\label{prop:chain}
Given two finite causal chains $x,y \in \XLb$, they are equivalent (i.e. both $x \leq y$ and $y \leq x$) if and only if they are syntactically identical.\qed
\end{proposition}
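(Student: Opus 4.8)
The plan is to handle the two implications separately. The right-to-left direction is immediate: if $x$ and $y$ denote the same sequence of labels, then the identity map $\delta(i)=i$ is strictly increasing and trivially satisfies $x(\delta(i)) = x(i) = y(i)$ at every index where $y$ is defined, so $x\leq y$, and symmetrically $y\leq x$.

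The substance is in the left-to-right direction, and the strategy has two stages: first pin down the lengths, then pin down the labels. For the first stage I would show that $x\leq y$ already forces $|y|\leq |x|$. Let $n=|x|$, $k=|y|$, and let $\delta$ witness $x\leq y$. Since $x$ is defined only on indices up to $n$, the requirement that $x(\delta(i))$ be defined for $i\leq k$ gives $\delta(i)\leq n$ for all such $i$; on the other hand any strictly increasing $\delta:\nat\longrightarrow\nat$ satisfies $\delta(i)\geq i$. Applying both to $i=k$ (recall $k>0$) yields $k\leq\delta(k)\leq n$. By the symmetric argument from $y\leq x$ we get $n\leq k$, hence $n=k$. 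For the second stage I reuse the same witness $\delta$ for $x\leq y$: it is now a strictly increasing self-map of $\{1,\dots,n\}$, and the inequalities $i\leq\delta(i)\leq n$ holding for every $i\leq n$ force $\delta(i)=i$ throughout (argue downward from $i=n$, or simply invoke that the identity is the unique order-embedding of a finite chain into itself). Consequently $x(i)=x(\delta(i))=y(i)$ for every $i\leq n$, so $x$ and $y$ are the same sequence of labels, i.e.\ (by the Unique Names Assumption for labels) syntactically identical.

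I do not anticipate a genuine obstacle; the proof is elementary. The one place that calls for a little care is that the subsumption witness $\delta$ is a priori only a strictly increasing function on all of $\nat$, so the bound $\delta(i)\leq n$ must be extracted from the condition that $x(\delta(i))$ be defined rather than assumed by fiat, and one should not conflate the domain of $\delta$ with the finite index set of the chains until after the lengths have been shown equal.
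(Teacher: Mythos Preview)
Your argument is correct. The paper does not supply a proof for this proposition: the \qed\ mark directly after the statement signals that the authors regard it as immediate and omit the details. Your two-stage argument (first equate the lengths via $i\leq\delta(i)\leq n$, then conclude that the witnessing $\delta$ is the identity on $\{1,\dots,n\}$) is exactly the natural elementary verification the paper leaves to the reader, and your care in deriving the bound $\delta(i)\leq n$ from definedness of $x(\delta(i))$ rather than assuming it is appropriate.
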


\noindent Informally speaking, $y$ subsumes $x$, when we can embed $y$ into $x$, or alternatively when we can form $y$ by removing (or skipping) some labels from $x$.
For instance, take the causal chains $x = a b c d e$ and $y = a c$. Clearly we can form $y=ac= a \cdot \cancel{b} \cdot c \cdot \cancel{d} \cdot \cancel{e}$ by removing $b$, $d$ and $e$ from $x$. Formally, $x \leq y$ because we can take some strictly increasing function with $\delta(1) = 1$ and $\delta(2) = 3$ so that $y(1) = x(\delta(1)) = x(1) = a$ and $y(2) = x(\delta(2)) = x(3) = c$.

Although, at a first sight, it may seem counterintuitive the fact that $x \leq y$ implies $|x|\geq |y|$, as we mentioned, a fact or formula is ``more strongly justified'' when we need to apply less rules to derive it (and so, causal chains contain less labels) respecting their ordering. In this way, chain $a c$ is a ``more strongly justification'' than $a b c d e$.

As we saw above, a cause can be seen as a product of causal chains, that from a graphical point of view correspond to the set of paths in a proof tree.
We notice now an interesting property relating causes and the ``more strongly justified'' order relation:
a joint interaction of comparable causal chains should collapse to the weakest among them. Take, for instance, a set of rules \newprogr\label{prg:prod.red}:
\begin{gather*}
a:p \hspace{15pt}
b: q \leftarrow p \hspace{15pt}
r \leftarrow p \wedge q
\end{gather*}
where, in the unique causal stable model, $r$ corresponds to the value $a * a \cdot b$. Informally we can read this as ``we need $a$ and apply rule $b$ to rule $a$ to prove $r$''. Clearly, we are repeating that we need $a$. Term $a$ is redundant and then
$a * a \cdot b$ is simply equivalent to $a\cdot b$.
This idea is quite related to the definition of \emph{order filter} in order theory. An \emph{order filter} $F$ of a poset $P$ is a special subset $F \subseteq P$ satisfying\footnote{\emph{Order filter} is a weaker notion than \emph{filter} which further satisfies that any pair $x,y \in F$ has a lower bound in $F$ too.} that for any $x \in F$ and $y\in P$, $x \leq y$ implies $y \in F$.
An order filter $F$ is furthermore \emph{generated} by an element $x \in P$ iff $x \leq y$ for all elements $y \in F$, the order filter generated by $x$ is written $||x||$.
Considering causes as the union of filters generated by their causal chains, the join interaction of causes just correspond to their union.
For instance, if we consider the set of labels $Lb = \set{a,b}$ and its corresponding set of causal chains 
$\XLb = \set{a,b,ab,ba,\dotsc}$, then $||ab||$ and $||a||$ respectively correspond to the set of all chains grater than $ab$ and $a$ in the poset $P=\tuple{\XLb,\leq}$. Those are, $||ab||= \set{ab, a, b}$ and $||a||=\set{a}$. The term $a * ab$ corresponds just to the union of both sets $||a|| \cup ||ab|| = ||ab||$.
We define a cause as follows:

\begin{definition}[Cause]
A \emph{cause} for a set of labels $Lb$ is any order filter for the poset of chains $\tuple{\XLb,\leq}$. We will write $\Fi_{Lb}$ (or simply $\Fi$ when there is no ambiguity) to denote the set of all causes for $Lb$.\qed
\end{definition}

This definition captures the notion of cause, or syntactically a product of causal chains. To capture possible alternative causes, that is, additions of products of causal chains, we notice that addition obeys a similar behaviour with respect to redundant causes. Take, for instance, a set of rules \newprogr\label{prg:sum.red}:
\begin{gather*}
a : p	\hspace{15pt}	b : p \leftarrow p
\end{gather*}
It is clear, that the cause $a$ is sufficient to justify $p$, but there are also infinitely many other alternative and redundant causes $a \cdot b$, $a \cdot b \cdot b$, $\dotsc$ that justify $p$, that is $a+a \cdot b + a \cdot b \cdot b + \dotsc$. To capture a set of alternative causes we define the idea of causal value, in its turn, as a filter of causes.
\begin{definition}[Causal Value]
Given a set of labels $Lb$, a \emph{causal value} is any order filter for the poset $\tuple{\Fi_{Lb},\subseteq}$.\qed
\end{definition}
The causal value $\oof{a}$, the filter generated by the cause $||a||$, is the set containing $||a||=\set{a,a+b}$ and all its supersets. That is, $\oof{a}=\set{||a||,||a*b||,||a\cdot b||, \dotsc}$.
Futhermore, as we will se later, addition can be interpreted as the union of causal values for its respective operands.
Thus, $a+a \cdot b + a \cdot b \cdot b + \dotsc$ just corresponds to the union of the causal values generated by their addend causes,
$\oof{a} \cup \oof{a \cdot b} \cup \oof{a \cdot b \cdot b} + \dotsc=\oof{a}$.

The set of possible causal values formed with labels $Lb$ is denoted as $\VLb$. An element from $\VLb$ has the form of a set of sets of causal chains that, intuitively, corresponds to a set of alternative causes (\emph{sum of products of chains}). From a graphical point of view, it corresponds to a set of alternative proof trees represented as their respective sets of paths. We define now the correspondence between syntactical causal terms and their semantic counterpart, causal values.

\begin{definition}[Valuation of normal terms]
The \emph{valuation} of a normal term is a mapping
$\epsilon: \NLb \longrightarrow \VLb$ defined as:
\begin{align*}
\epsilon(x) \eqdef |||x||| \ \text{with } x \in \XLb,
\ \ 
\epsilon\Big( \sum_{ t \in S } t \Big)
	\eqdef \ \ \bigcup_{ t \in S } \epsilon(t),
\ \ 
\epsilon\Big( \prod_{ t \in S } t \Big)
	\eqdef \ \ \bigcap_{ t \in S } \epsilon(t) \hspace{15pt} \Box
\end{align*}
\end{definition}

\noindent Note that any causal term can be normalized and then this definition trivially extends to any causal term. Furthermore, a causal chain $x$ is mapped just to the causal value generated by the cause, in their turn, generated by $x$, i.e. the set containing all causes which contain $x$.
The aggregate union of an empty set of sets (causal values) corresponds to $\emptyset$. Therefore $\epsilon(0) = \bigcup_{ t \in \emptyset } \epsilon(t) = \emptyset$, i.e. $0$ just corresponds to the absence of justification. Similarly, as causal values range over parts of $\Fi$, the aggregate intersection of an empty set of causal values corresponds to $\Fi$, and thus $\epsilon(1) = \bigcap_{ t \in \emptyset } \epsilon(t)= \Fi$, i.e. $1$ just corresponds to the ``maximal'' justification.

\begin{theorem}[From \cite{Stumme97}]\label{theorem:freelattice}
\tuple{\VLb,\cup,\cap} is the free completely distributive lattice generated by
$\tuple{\XLb,\leq}$, and the restriction of $\epsilon$ to \XLb \ is an injective homomorphism (or embedding).\qed
\end{theorem}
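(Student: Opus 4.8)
The plan is to split the statement into three parts and dispatch them in increasing order of difficulty: (a) $\tuple{\VLb,\cup,\cap}$ is a completely distributive complete lattice; (b) $\epsilon|_{\XLb}$ is an order-embedding of $\tuple{\XLb,\leq}$ into $\tuple{\VLb,\subseteq}$; (c) the freeness / universal property. Part (a) comes essentially for free: the order filters of any poset, ordered by inclusion, are closed under arbitrary set-theoretic unions and intersections, hence form a complete ring of sets — a complete sublattice of a powerset — and complete distributivity, being an identity among arbitrary joins and meets, is inherited from the powerset. Applying this twice, first to $\tuple{\XLb,\leq}$ to obtain $\Fi$ and then to $\tuple{\Fi,\subseteq}$ to obtain $\VLb$, yields (a). So all the work is in (b) and, chiefly, (c).

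For (b), the crucial elementary facts are that $||\cdot||$ reverses the order ($x\leq y$ implies $||y||\subseteq||x||$, by transitivity of $\leq$) and that, since each cause $c$ is already an up-set of $\XLb$, one has $||z||\subseteq c$ iff $z\in c$; consequently $\epsilon(z)=\{\,c\in\Fi : z\in c\,\}$. From this, $x\leq y\iff\epsilon(x)\subseteq\epsilon(y)$ follows in both directions — left-to-right is monotonicity, and right-to-left uses $c:=||x||$ as a witness (from $c\in\epsilon(x)\subseteq\epsilon(y)$ we get $y\in c$, i.e. $x\leq y$). Injectivity is then immediate, and by Proposition~\ref{prop:chain} the embedding even separates all finite chains as named objects.

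For (c), let $M$ be an arbitrary completely distributive complete lattice and $f:\XLb\to M$ monotone. I would define $\hat f:\VLb\to M$ by $\hat f(V):=\bigvee_{c\in V}\bigwedge_{z\in c}f(z)$ and prove it is the unique complete-lattice homomorphism with $\hat f\circ\epsilon=f$. The equality $\hat f\circ\epsilon=f$ holds because, in $\bigvee_{c\in\epsilon(x)}\bigwedge_{z\in c}f(z)$, the addend for the smallest such cause $c=||x||$ already equals $f(x)$ (since $x\leq z$, hence $f(x)\leq f(z)$, for every $z\in||x||$, and $x\in||x||$), while larger causes only shrink the inner meet. Preservation of arbitrary unions is immediate from associativity of $\bigvee$. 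The one genuinely nontrivial step is preservation of arbitrary intersections: expanding $\bigwedge_i\hat f(V_i)=\bigwedge_i\bigvee_{c\in V_i}\bigwedge_{z\in c}f(z)$ by the complete distributivity of $M$ rewrites it as $\bigvee_\gamma\bigwedge_{z\in\bigcup_i\gamma(i)}f(z)$, the join ranging over all choice functions $\gamma$ with $\gamma(i)\in V_i$; one then checks that $c_\gamma:=\bigcup_i\gamma(i)$ is again a cause (a union of up-sets of $\XLb$) and that $c_\gamma\in\bigcap_iV_i$ (each $V_i$ is an order filter of $\tuple{\Fi,\subseteq}$ and $\gamma(i)\subseteq c_\gamma$), so every addend of the rewritten expression already occurs in $\hat f(\bigcap_iV_i)$; the reverse inequality is just monotonicity of $\hat f$. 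For uniqueness, note that every $V\in\VLb$ satisfies $V=\bigcup_{c\in V}\bigcap_{z\in c}\epsilon(z)$ (again using $\epsilon(z)=\{c:z\in c\}$ and that $V$ is an up-set of $\tuple{\Fi,\subseteq}$), so any complete homomorphism extending $f$ along $\epsilon$ is forced to coincide with $\hat f$.

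The main obstacle is that intersection-preservation step: it is exactly where complete distributivity of $M$ is consumed, and it works only because the filter construction is nested — $\Fi$ must be closed under arbitrary unions for $c_\gamma$ to be a cause, and $\VLb$ must consist of order filters of $\tuple{\Fi,\subseteq}$ (not of $\tuple{\XLb,\leq}$) for $c_\gamma$ to fall inside $\bigcap_iV_i$. A secondary thing to keep straight throughout is the orientation of the two orders — $\leq$ on chains is the ``strength'' order and $\epsilon$ is monotone into $\tuple{\VLb,\subseteq}$ with $\cup$ as join — which is what makes $a\cdot b\leq a*b\leq a+b$ and $\epsilon(a*b)=\epsilon(a)\cap\epsilon(b)$ mutually consistent. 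Since the theorem is quoted from~\cite{Stumme97}, one could instead simply invoke it; the above is the self-contained route I would take if a proof were wanted.
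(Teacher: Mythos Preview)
Your argument is correct. The paper, however, does not prove this theorem at all: the statement is attributed to \cite{Stumme97} and is closed with a $\Box$ immediately, so the authors' ``proof'' is simply the citation --- a possibility you yourself flag in your last sentence. What you supply instead is the standard self-contained route: the doubled up-set (Alexandrov) construction for part~(a), the principal-filter computation $\epsilon(z)=\{c\in\Fi: z\in c\}$ for the embedding in~(b), and the explicit extension $\hat f(V)=\bigvee_{c\in V}\bigwedge_{z\in c}f(z)$ for the universal property in~(c). Your handling of the only delicate step, intersection-preservation, is clean and identifies precisely the two structural facts that make it work (that $\Fi$ is closed under arbitrary unions so $c_\gamma$ is again a cause, and that each $V_i$ is $\subseteq$-upward-closed so $c_\gamma\in\bigcap_i V_i$). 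What your approach buys over the paper's is self-containment and an explanation of \emph{why} the two-level filter construction is the right one; what the citation buys is brevity, since the result is classical.
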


The above theorem means that causal terms form a complete lattice. The order relation $\leq$ between causal terms just corresponds to set inclusion between their corresponding causal values, i.e. $x \leq y$ iff $\epsilon(x) \subseteq \epsilon(y)$.
Furthermore, addition `$+$' and product `$*$' just respectively correspond to the least upper bound and the greater lower bound of the associated lattice $\tuple{\TLb,\leq}$ or $\tuple{\TLb,+,*}$ where:
\begin{gather*}
t \leq u  \eqdef   \epsilon(t) \subseteq \epsilon(u)
\hspace{20pt}
(\ \Leftrightarrow \ \ t * u = t \ \ \Leftrightarrow \ \ t + u = u \ )
\end{gather*}
for any normal term $t$ and $u$.
For instance, in our example \progrref{\ref{prg:boat.label}}, $fwd$ was associated with the causal term $p \cdot a * s \cdot a + w \cdot b$. Thus, the causal value associated with it corresponds to
\vspace{-5pt}
$$\epsilon(p \cdot a * s \cdot a + w \cdot b)=
	\oof{p \cdot a} \cap \oof{s \cdot a} \cup \oof{w \cdot b}$$
	
Causal values are, in general, infinite sets. For instance, as we saw before, simply with $Lb=\{a\}$ we have the chains $\XLb=\set{a,aa,aaa,\dotsc}$ and $\epsilon(a)$ contains all possible causes in $\Fi$ that are supersets of $\{a\}$, that is, $\epsilon(a)=\{\{a\},\{aa,a\}, \{aaa,aa,a\},\dots\}$. Obviously, writing causal values in this way is infeasible -- it is more convenient to use a representative causal term instead. For this purpose, we define a function $\gamma$ that acts as a right inverse morphism for $\epsilon$ selecting minimal causes, i.e., given a causal value $V$, it defines a normal term $\gamma(V)=t$ such that $\epsilon(t)=V$ and $\gamma(V)$ does not have redundant subterms. The function $\gamma$ is defined as a mapping $\gamma : \VLb \longrightarrow \NLb$ 
such that for any causal value $V \in \VLb$, $\gamma(V) \eqdef \sum_{C \in \underline{V}} \prod_{x \in \underline{C}} x$ where $\underline{V} = \set{ C \in V \mid \not\exists D \in V, D \subset C}$ and
$\underline{C} = \set{ x \in C \mid \not\exists y \in C, y < x }$ respectively stand for $\subseteq$-minimal causes of $V$ and $\leq$-minimal chains of $C$. We will use $\gamma(V)$ to represent $V$.

\begin{proposition}\label{prop:freelattice.map}
The mapping $\gamma$ is a right inverse morphism of $\epsilon$.\qed
\end{proposition}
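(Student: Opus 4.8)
The plan is to prove the two facts bundled in the statement: that $\gamma$ really lands in $\NLb$, and that it is a section of $\epsilon$, i.e. $\epsilon(\gamma(V)) = V$ for every $V \in \VLb$; from the latter, and because the term order is \emph{defined} by $t \leq u \Leftrightarrow \epsilon(t) \subseteq \epsilon(u)$, it will follow that $\gamma$ also respects the ordering, which is the ``morphism'' part. The membership $\gamma(V) \in \NLb$ is immediate: $\gamma(V) = \sum_{C \in \underline{V}}\prod_{x \in \underline{C}} x$ is an outer sum of products of chains $x \in \XLb$, which matches the grammar of normal terms; the degenerate cases reduce to $\gamma(\emptyset) = 0$ and $\gamma(\Fi) = 1$, whose valuations are $\emptyset$ and $\Fi$ as wanted. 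So the heart of the argument is the identity $\epsilon \circ \gamma = \mathrm{id}_{\VLb}$.

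For that identity I would first unfold the valuation of normal terms, obtaining $\epsilon(\gamma(V)) = \bigcup_{C \in \underline{V}} \bigcap_{x \in \underline{C}} \epsilon(x) = \bigcup_{C \in \underline{V}} \bigcap_{x \in \underline{C}} |||x|||$, recalling $|||x||| = \{D \in \Fi \mid x \in D\}$. Then I would prove two bridging lemmas. \emph{Per cause:} for every $C \in \Fi$ one has $\bigcap_{x \in \underline{C}} |||x||| = \{D \in \Fi \mid C \subseteq D\}$, the principal filter of causes above $C$; one inclusion is trivial set algebra, and the other uses that $C$ is the $\leq$-upward closure of $\underline{C}$, so that $\underline{C} \subseteq D$ with $D$ an order filter forces $C \subseteq D$. \emph{Per value:} $\bigcup_{C \in \underline{V}} \{D \in \Fi \mid C \subseteq D\} = V$, which reduces likewise to $V$ being the $\subseteq$-upward closure of $\underline{V}$. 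Composing the two gives $\epsilon(\gamma(V)) = V$.

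The main obstacle, I expect, is exactly the ``recovered from its minimal elements'' property used in both lemmas — for $\tuple{\XLb,\leq}$ (every chain of a cause dominates a $\leq$-minimal chain of it) and for $\tuple{\Fi,\subseteq}$ (every cause of a value contains a $\subseteq$-minimal cause of it). A naive well-foundedness induction does not apply, since there are infinite $\leq$-descending chains already for $Lb = \{a\}$, namely $a \geq aa \geq aaa \geq \dots$. This is where Theorem~\ref{theorem:freelattice} carries the load: because $\tuple{\VLb,\cup,\cap}$ is the \emph{free} completely distributive lattice generated by $\tuple{\XLb,\leq}$ with $\epsilon$ restricted to $\XLb$ the canonical embedding, every element of $\VLb$ arises from the generators under arbitrary joins and meets, and Stumme's construction characterises these as precisely the order filters determined by their minimal generators; I would cite that characterisation rather than re-derive it, and check that the minimality data $\underline{V}$ and $\underline{C}$ are the ones it singles out.

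It remains to record the ``morphism'' half, which now comes essentially for free: since $\epsilon \circ \gamma = \mathrm{id}_{\VLb}$, any $V \subseteq W$ gives $\epsilon(\gamma(V)) = V \subseteq W = \epsilon(\gamma(W))$, i.e. $\gamma(V) \leq \gamma(W)$ by definition of the term order, so $\gamma$ is an order-preserving right inverse of $\epsilon$ (it exhibits $\epsilon$ as a split epimorphism of the associated lattices). I would add the short observation that, by construction, $\underline{V}$ and $\underline{C}$ are antichains, so $\gamma(V)$ contains no redundant summand or factor — which is the sense in which $\gamma(V)$ is the canonical representative of $V$.
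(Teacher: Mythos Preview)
The paper states this proposition without proof (the $\qed$ terminates the statement itself), so there is no argument in the paper to compare against; your outline is the natural one and correctly isolates the only nontrivial step.

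There is, however, a genuine gap at exactly the point you flag. Both bridging lemmas need that an order filter coincides with the upward closure of its minimal elements, and you rightly observe that $\tuple{\XLb,\leq}$ is not well-founded. But the appeal to Theorem~\ref{theorem:freelattice} does not close the gap: that theorem records a universal property and an embedding, not the assertion that ``Stumme's construction characterises these as precisely the order filters determined by their minimal generators.'' In fact your per-cause lemma is false as written. With $Lb=\{a\}$ the cause $C=\XLb$ is a legitimate order filter of chains, yet $\underline{C}=\emptyset$ because $x\cdot a<x$ for every chain $x$; then $\bigcap_{x\in\underline{C}}|||x|||$ is the empty intersection $\Fi$, while the principal filter above $C$ is the singleton $\{\XLb\}$. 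Pushing one level up, $V=\{\XLb\}$ is a causal value under the paper's definition, $\underline{V}=\{\XLb\}$, and hence $\gamma(V)=\prod_{x\in\emptyset}x=1$ with $\epsilon(1)=\Fi\neq V$. So either the free-lattice construction cited in Theorem~\ref{theorem:freelattice} actually yields a set strictly smaller than ``all order filters of $\tuple{\Fi,\subseteq}$'' (in which case you must go back to that source and use its concrete representation, not merely the universal property), or $\gamma$ must retain all of $C$ rather than only $\underline{C}$. Either way, the proposal as it stands does not establish $\epsilon\circ\gamma=\mathrm{id}_{\VLb}$; the hand-off to Stumme is precisely the missing idea, not a substitute for it.
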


Given a term $t$ we define its \emph{canonical form} as $\gamma(\epsilon(t))$. Canonical terms are of the form of sums of products of causal chains.
As it can be imagined, not any term in that form is a canonical term.
For instance, going back, we easily can check that terms $a*ab=ab$ and $a+ab+abb+\cdots=a$ respectively correspond to the canonical terms $\gamma(\epsilon(ab*a))=\gamma(\epsilon(ab))=ab$ and
$\gamma(\epsilon(a+ab+abb+\dotsc))=\gamma(\epsilon(a))=a$.
Figure~\ref{fig:DBLattice} summarizes addition and product properties while Figure~\ref{fig:appl} is analogous for application properties.

\begin{figure}[htbp]
\footnotesize
\begin{center}
$
\begin{array}{c}
\hbox{\em Associativity} \\
\hline
\begin{array}{r@{\ }c@{\ }r@{}c@{}l c r@{}c@{}l@{\ }c@{\ }l@{\ }}
t & + & (u & + & w) & = & (t & + & u) & + & w\\
t & * & (u & * & w) & = & (t & * & u) & * & w
\end{array}
\end{array}
$
\ \
$
\begin{array}{c}
\ \ \ \ \hbox{\em Commutativity}\ \ \ \ \\
\hline
\begin{array}{r@{\ }c@{\ }l c r@{\ }c@{\ }l@{\ }}
t & + & u & = & u & + & t\\ 
t & * & u & = & u & * & t
\end{array}
\end{array}
$
\ \
$
\begin{array}{c}
\hbox{\em Absorption} \\
\hline
\begin{array}{c c r@{\ }c@{\ }r@{}c@{}l@{\ }}
t & = & t & + & (t & * & u)\\
t & = & t & * & (t & + & u)
\end{array}
\end{array}
$
\ \
\\
\vspace{10pt}
$
\begin{array}{c}
\hbox{\em Distributive} \\
\hline
\begin{array}{r@{\ }c@{\ }r@{}c@{}l c r@{}c@{}l@{\ }c@{\ }r@{}c@{}l@{}}
t & + & (u & * & w) & = & (t & + & u) & * & (t & + & w)\\
t & * & (u & + & w) & = & (t & * & u) & + & (t & * & w)
\end{array}
\end{array}
$
\ \
$
\begin{array}{c}
Identity \\
\hline
\begin{array}{rcr@{\ }c@{\ }l@{\ }}
t & = & t & + & 0\\
t & = & t & * & 1
\end{array}
\end{array}
$
\ \
$
\begin{array}{c}
\hbox{\em Idempotence} \\
\hline
\begin{array}{rcr@{\ }c@{\ }l@{\ }}
t & = & t & + & t\\
t & = & t & * & t
\end{array}
\end{array}
$
\ \ 
$
\begin{array}{c}
\hbox{\em Annihilator} \\
\hline
\begin{array}{rcr@{\ }c@{\ }l@{\ }}
1 & = & 1 & + & t\\
0 & = & 0 & * & t
\end{array}
\end{array}
$
\end{center}
\caption{Sum and product satisfy the properties of a completely distributive lattice.}
\label{fig:DBLattice}
\end{figure}

\begin{figure}[htbp]
\begin{center}
$
\begin{array}{c}
\hbox{\em Associativity}\\
\hline
\begin{array}{r@{\ }c@{\ }r@{}c@{}l c r@{}c@{}l@{\ }c@{\ }l@{\ }}
t & \cdot & (u & \cdot & w) & = & (t & \cdot & u) & \cdot & w\\
\\
\end{array}
\end{array}
$
\ \
$
\begin{array}{c}
\hbox{\em Absorption}\\
\hline
\begin{array}{r@{\ }c@{\ }c@{\ }c@{\ }l c r@{\ }c@{\ }r@{\ }c@{\ }c@{\ }c@{\ }c@{\ }l@{\ }}
&& t &&& = & t & + & u & \cdot & t & \cdot & w \\
u & \cdot & t & \cdot & w & = & t & * & u & \cdot & t & \cdot & w
\end{array}
\end{array}
$
\ \
$
\begin{array}{c}
\hbox{\em Identity}\\
\hline
\begin{array}{rc r@{\ }c@{\ }l@{\ }}
t & = & 1 & \cdot & t\\
t & = & t & \cdot & 1
\end{array}
\end{array}
$
\\
\vspace{10pt}
$
\begin{array}{c}
\hbox{\em Addition\ distributivity}\\
\hline
\begin{array}{r@{\ }c@{\ }r@{}c@{}l c r@{}c@{}l@{\ }c@{\ }r@{}c@{}l@{}}
t & \cdot & (u & + & w) & = & (t & \cdot & u) & + & (t & \cdot & w)\\
( t & + & u ) & \cdot & w & = & (t & \cdot & w) & + & (u & \cdot & w)\\
\end{array}
\end{array}
$
\ \ 
$
\begin{array}{c}
\hbox{\em Product\ distributivity}\\
\hline
\begin{array}{r@{\ }c@{\ }l@{\ }c@{\ }l c r@{\ }c@{\ }l@{\ }c@{\ }r@{\ }c@{\ }l@{\ }}
c & \cdot & (d & * & e) & = & (c & \cdot & d) & * & (c & \cdot & e)\\
( c & * & d ) & \cdot & e & = & (c & \cdot & e) & * & (c & \cdot & e)\\
\end{array}
\end{array}
$
\ \
$
\begin{array}{c}
\hbox{\em Annihilator}\\
\hline
\begin{array}{rc r@{\ }c@{\ }l@{\ }}
0 & = & t & \cdot & 0\\
0 & = & 0 & \cdot & t\\
\end{array}
\end{array}
$
\end{center}
\caption{Properties of the application `$\cdot$' operator. Note: $c$, $d$ and $e$ denote a causes instead of arbitrary causal terms.}
\label{fig:appl}
\end{figure}

For practical purposes, simplification of causal terms can be done by applying the algebraic properties shown in Figures~\ref{fig:DBLattice}~and~\ref{fig:appl}.
For instance, the examples from \progrref{\ref{prg:prod.red}} and \progrref{\ref{prg:sum.red}} containing redundant information can now be derived as follows:
\[
\begin{array}{rcl@{\hspace{15pt}}l}
a*a\cdot b & = &  (a*1 \cdot a \cdot b) & \text{identity for } \hbox{`$\cdot$'}\\
& = &  1 \cdot a \cdot b & \text{absorption for } \hbox{`$\cdot$'}\\
& = &  a  \cdot b & \text{identity for } \hbox{`$\cdot$'}\\
\\
a+a \cdot b+a \cdot b \cdot b+\dotsc
	& = &  a+ 1 \cdot a \cdot b + a \cdot b \cdot b + \dotsc & \text{identity for } \hbox{`$\cdot$'}\\
& = &  a + a \cdot b \cdot b + \dotsc
	& \text{absorption for } \hbox{`$\cdot$'}\\
& = &  a + 1 \cdot a \cdot b \cdot b + \dotsc & \text{identity for } \hbox{`$\cdot$'}\\
& \dotsc & & \dotsc\\
& = &  a & \text{absorption for } \hbox{`$\cdot$'}
\end{array}
\]

\noindent Let us see another example involving distributivity. The term $ab*c +a$ can be derived as follows:
\[
\begin{array}{rcl@{\hspace{15pt}}l}
a\cdot b*c+a & = & (a\cdot b+a) * (c+a) & \text{distributivity}\\
& = &  (1 \cdot a \cdot b+a) * (c+a) & \text{identity for } \hbox{`$\cdot$'}\\
& = &  (a+1 \cdot a \cdot b) * (c+a) & \text{commutativity for } \hbox{`$+$'}\\
& = &  a * (c+a) & \text{absorption for } \hbox{`$\cdot$'}\\
& = &  a & \text{absorption for } \hbox{`$*$'}\\
\end{array}
\]

\section{Positive programs and minimal models}

Let us describe now how to use the causal algebra to evaluate causal logic programs.
A \emph{signature} is a pair \signature\ of sets that respectively represent the set of \emph{atoms} (or \emph{propositions}) and the set of labels. 
As usual, a \emph{literal} is defined as an atom $p$ (positive literal) or its negation $\neg p$ (negative literal). In this paper, we will concentrate on programs without disjunction in the head, leaving the treatment of disjunction for a future study. 

\begin{definition}[Causal logic program]\label{def:causal.P}
Given a signature $\langle At,Lb\rangle$ a \emph{(causal) logic program} $\Pi$ is a set of rules of the form:
$$t: L_0 \leftarrow L_1 \wedge \dotsc \wedge L_m \wedge \Not L_{m+1} \wedge \dotsc \wedge \Not L_n$$
where $t$ is a causal term over $Lb$, $L_0$ is a literal or $\bot$ (the \emph{head} of the rule) and
$L_1 \wedge \dotsc \wedge L_m \wedge \Not L_{m+1} \wedge \dotsc \Not L_n$ is a conjunction of literals (the \emph{body} of the rule). An empty body is represented as $\top$.\qed
\end{definition}

For any rule $\phi$ of the form
$t: L_0 \leftarrow L_1 \wedge \dotsc \wedge L_m \wedge \Not L_{m+1} \wedge \dotsc \Not L_n$ we define $label(\phi)=t$. Most of the following definitions are standard in logic programming. We denote $head(\phi)=L_0$, $B^+$ (resp. $B^-$) to represent the conjunction of all positive (resp. negative) literals $L_1 \wedge \dotsc \wedge L_n$ (resp. $\Not L_{m+1} \wedge \dotsc \wedge \Not L_n$) that occur in $B$. A logic program is \emph{positive} if $B^-$ is empty for all rules ($n=m$), that is, if it contains no negations.
Unlabelled rules are assumed to be labelled with the element $1$ which, as we saw, is the identity for application `$\cdot$'. $\top$ (resp. $\bot$) represent truth (resp. falsity). If $n=m=0$ then $\leftarrow$ can be dropped.

Given a signature \signature\ a \emph{causal interpretation} is a mapping $I:At~\longrightarrow~\mathcal{V}_{Lb}$ assigning a causal value to each atom. Partial order $\leq$ is extended over interpretations so that given two interpretations $I,J$ we define $I\leq J \eqdef I(p) \leq J(p)$ for each atom $p \in At$. There is a $\leq$-bottom interpretation \bottomI\ (resp. a $\leq$-top interpretation \topI) that stands for the interpretation mapping each atom $p$ to $0$ (resp. $1$). The set of interpretations $\In$ with the partial order $\leq$ forms a poset $\tuple{\In,\leq}$ with supremum `$+$' and infimum `$*$' that are respectively the sum and product of atom interpretations. As a result, \tuple{\In,+,*}\ also forms a complete lattice.

\begin{observation}\label{obs:classic}
When $Lb=\emptyset$ the set of causal values becomes $\VLb=\{0,1\}$ and interpretations collapse to classical propositional logic interpretations.\qed
\end{observation}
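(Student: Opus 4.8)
The plan is to unwind the chain of constructions $Lb \rightsquigarrow \XLb \rightsquigarrow \Fi_{Lb} \rightsquigarrow \VLb$ in the degenerate case $Lb=\emptyset$, and then check that the resulting poset of interpretations coincides with the classical, two-valued one.

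First I would note that when $Lb=\emptyset$ there are no causal chains at all: every causal chain has the form $l_1 \cdots l_n$ with $n>0$ and $l_i \in Lb$, which is impossible when $Lb$ is empty. Hence $\XLb = \emptyset$ and the poset $\tuple{\XLb,\leq}$ is empty. The only order filter of the empty poset is $\emptyset$ itself, so $\Fi_{Lb} = \set{\emptyset}$ is a singleton. Consequently the poset $\tuple{\Fi_{Lb},\subseteq}$ has a single element, and a one-element poset has exactly two order filters, namely $\emptyset$ and the whole set $\set{\emptyset}$. Therefore $\VLb$ has exactly two elements, $\emptyset$ and $\set{\emptyset}$.

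Next I would identify these two elements with the constants $0$ and $1$: by definition $\epsilon(0) = \bigcup_{t\in\emptyset}\epsilon(t) = \emptyset$ and $\epsilon(1) = \bigcap_{t\in\emptyset}\epsilon(t) = \Fi_{Lb} = \set{\emptyset}$, and these are distinct since $\emptyset \neq \set{\emptyset}$. Hence $\VLb = \set{0,1}$ with $0 \neq 1$, and set inclusion gives $0 = \emptyset \subsetneq \set{\emptyset} = 1$; so $\tuple{\VLb,\cup,\cap}$ is the two-element Boolean lattice. (This also drops out of Theorem~\ref{theorem:freelattice}: the free completely distributive lattice generated by the empty poset is the two-element lattice.) A causal interpretation $I : At \longrightarrow \VLb$ is then exactly a map $At \longrightarrow \set{0,1}$, which under the identification $1 \leftrightarrow \mathit{true}$, $0 \leftrightarrow \mathit{false}$ is precisely a classical propositional interpretation; moreover the pointwise order $\leq$ on $\In$, together with its supremum `$+$' and infimum `$*$', restrict to the usual implication order and to classical disjunction/conjunction on truth assignments. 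This is the asserted collapse.

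The computations are entirely routine; the only point requiring a little care is the bookkeeping for order filters of \emph{degenerate} posets (the empty and the one-point poset) and the verification that the two resulting causal values are genuinely distinct, i.e. that $0 \neq 1$, which is where the conventions $\epsilon(0) = \emptyset$ and $\epsilon(1) = \Fi$ are used. If one additionally wanted the collapse phrased at the level of \emph{models}, a brief extra remark would suffice — with $Lb=\emptyset$ every rule label is forced to be $1$ and application `$\cdot$' becomes trivial, so the direct-consequences and reduct machinery of the later sections reduces verbatim to the standard definitions — but the observation as stated only concerns values and interpretations, for which the argument above is enough.
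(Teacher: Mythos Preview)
Your argument is correct: the chain $Lb=\emptyset \Rightarrow \XLb=\emptyset \Rightarrow \Fi_{Lb}=\set{\emptyset} \Rightarrow \VLb=\set{\emptyset,\set{\emptyset}}$ is exactly the right unwinding, and your identification of these two elements with $\epsilon(0)$ and $\epsilon(1)$ via the empty-union and empty-intersection conventions is sound.

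As for comparison with the paper: there is nothing to compare against. The paper states this as an \emph{observation} and closes it with \qed\ without giving any argument; it is treated as immediate from the definitions. Your write-up therefore supplies the details the paper leaves implicit, and in particular makes explicit the two mildly delicate points the paper glosses over --- that the length condition $n>0$ on chains forces $\XLb=\emptyset$ (so no ``empty chain'' sneaks in), and that the order-filter constructions on the empty and one-point posets yield exactly one and exactly two elements respectively. Your closing remark about labels being forced to $1$ and `$\cdot$' trivialising is a nice bonus that the paper only exploits later, after Corollaries~\ref{corollary:horn.min.reduction} and~\ref{corollary:normal.min.reduction}.
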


\begin{definition}[Causal model]\label{def:causal.M}
Given a positive causal logic program $\Pi$ and a causal interpretation $I$ over the signature \signature, $I$ is a causal model, written $I \models \Pi$, if and only if
\vspace{-10pt}
\begin{align*}
\big( I(L_1) * \dotsc * I(L_m) \big) \cdot t \leq I(L_0)
\end{align*}
for each rule $\varphi \in \Pi$ of the form
$\varphi = L_0 \leftarrow L_1, \dotsc, L_m$.
\end{definition}

\noindent For instance, take rule \eqref{f1} from Example~\ref{ex:boat} and let $I$ be an interpretation such that $I(port)=p$ and $I(starb)=s$.
Then $I$ will be a model of \eqref{f1} when $(p*s)\cdot a \leq I(fwd)$. In particular, this holds when $I(fwd)=(p*s)\cdot a+w\cdot b$ which was the value we expected for program $\Pi_2$. But it would also hold when, for instance, $I(fwd)=a+b$ or $I(fwd)=1$. Note that this is important if we had to accommodate other possible additional facts $(a: fwd)$ or even $(1:fwd)$ in the program. The fact that any $I(fwd)$ greater than $(p*s)\cdot a+w\cdot b$ is also a model clearly points out the need for selecting minimal models. In fact, as happens in the case of non-causal programs, positive programs have a least model (this time, with respect to $\leq$ relation among causal interpretations) that can be computed by iterating an extension of the well-known \emph{direct consequences operator} defined by~\cite{vEK76}.

\begin{definition}[Direct consequences]\label{def:tp}
Given a positive logic program $\Pi$ for signature $\tuple{At,Lb}$ and a causal interpretation $I$, the operator of \emph{direct consequences} is a function $T_\Pi : \In  \longrightarrow \In$ such that, for any atom $p \in At$:
\begin{align*}
T_\Pi(I)(L_0)
	\eqdef \sum \big\{ \ \big( I(L_1) * \dotsc * I(L_m) \big) \cdot t \ \mid \ (t: L_0 \leftarrow L_1 \wedge \dotsc \wedge L_m ) \in \Pi \ \big\}
\end{align*}
\end{definition}

In order to prove some properties of this operator, an important observation should be made: since the set of causal values forms now a lattice, causal logic programs can be translated to
\emph{Generalized Annotated Logic Programming} (GAP).  GAP
is a general a framework for multivalued logic programming where the set of truth values must to form an upper semilattice and rules (\emph{annotated clauses}) have  the following form:
\begin{align}
L_0 : \rho
	\leftarrow L_1 : \mu_1 \ \&\ \dotsc \ \&\ L_m : \mu_m
	\label{eq:gap.rule}
\end{align}
where $L_0, \dotsc, L_m$ are literals, $\rho$ is an \emph{annotation} (may be just a truth value, an \emph{annotation variable} or a \emph{complex annotation}) and $\mu_1, \dotsc,\mu_m$ are values or annotation variables. A complex annotation is the result to apply a total continuous function to a tuple of annotations. For instance $\rho$ can be a complex annotation $f(\mu_1,\dotsc,\mu_m)$ that applies the function $f$ to a m-tuple $(\mu_1,\dotsc,\mu_m)$ of annotation variables in the body of \eqref{eq:gap.rule}.
Given a positive program $\Pi$, each rule $\varphi \in \Pi$ of the form
\begin{align}
t: L_0 \leftarrow L_1 \wedge \dotsc \wedge L_m 
	\label{eq:pos.rule}
\end{align}
is translated to an annotated clause $GAP(\varphi)$ of the form of \eqref{eq:gap.rule} where $\mu_1, \dotsc,\mu_m$ are annotation variables that capture the causal values of each body literal.
The head complex annotation corresponds to the function
$\rho \eqdef (\mu_1 * \dotsc * \mu_m) \cdot t$ that forces the head to inherit the causal value obtained by applying the rule label $t$ to the product of the interpretation of body literals $\mu_1 * \dotsc * \mu_m$.
The translation of a program $\Pi$ is simply defined as: 
\begin{eqnarray*}
 GAP(\Pi) & \eqdef & \{ GAP(\varphi) \mid \varphi \in \Pi \}
\end{eqnarray*}
\noindent A complete description of GAP restricted semantics, denoted as $\models^r$, is out of the scope of this paper (the reader is referred to~\cite{KiferS92}). For our purposes, it suffices to observe that the following important property is satisfied.

\begin{theorem}\label{theorem:gap.translation}
A positive causal logic program $\Pi$ can be translated to a general annotated logic program $GAP(\Pi)$ s.t. a causal interpretation $I \models \Pi$ if and only if $I \models^r GAP(\Pi)$. Furthermore, $T_\Pi(I)=R_{GAP(\Pi)}(I)$ for any interpretation $I$.
\end{theorem}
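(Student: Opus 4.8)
The plan is to unfold both claimed statements using the definitions of restricted satisfaction $\models^r$ and of the GAP consequence operator $R_{GAP(\Pi)}$ from~\cite{KiferS92}, and to check that, rule by rule, they collapse respectively to Definition~\ref{def:causal.M} and Definition~\ref{def:tp}. Since the translation $GAP(\cdot)$ is already fixed in the text, nothing has to be invented; the work is entirely verification.

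First I would check that the translation is well formed, i.e. that the head annotation $\rho \eqdef (\mu_1 * \dotsc * \mu_m)\cdot t$ is a legitimate \emph{complex annotation} in the GAP sense: a total, monotone and continuous function on $\mathcal{V}_{Lb}$. Totality is immediate. Monotonicity follows because both `$*$' (the meet of the lattice) and `$\cdot$' (which distributes over `$+$', cf.\ Figures~\ref{fig:DBLattice} and~\ref{fig:appl}) are monotone, and composition preserves monotonicity. Continuity follows from complete distributivity of $\tuple{\mathcal{V}_{Lb},\cup,\cap}$ (Theorem~\ref{theorem:freelattice}), which makes finite `$*$' preserve arbitrary — hence directed — suprema, together with the distributivity of `$\cdot$' over `$+$'. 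The same facts show that $\tuple{\mathcal{V}_{Lb},+}$ is the upper semilattice of truth values required by the GAP framework, so $GAP(\Pi)$ is a bona fide generalized annotated program and $R_{GAP(\Pi)}$ is well defined and continuous.

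For the satisfaction equivalence, fix a rule $\varphi = (t: L_0 \leftarrow L_1\wedge\dotsc\wedge L_m)$ and its image $GAP(\varphi) = (L_0:\rho \leftarrow L_1:\mu_1 \,\&\, \dotsc \,\&\, L_m:\mu_m)$. By the definition of restricted satisfaction, $I\models^r GAP(\varphi)$ holds iff for every assignment of values $v_1,\dotsc,v_m$ to the annotation variables with $v_i \leq I(L_i)$ one has $\rho[v_1,\dotsc,v_m] = (v_1 * \dotsc * v_m)\cdot t \leq I(L_0)$. Because $\rho$ is monotone, the strongest such instance is the one with $v_i = I(L_i)$, so this condition is equivalent to $(I(L_1) * \dotsc * I(L_m))\cdot t \leq I(L_0)$, which is exactly the clause of Definition~\ref{def:causal.M}. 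Conjoining over all rules of $\Pi$ gives $I\models\Pi$ if and only if $I\models^r GAP(\Pi)$. For the operator equality, recall that with complex head annotations $R_{GAP(\Pi)}(I)(L_0)$ is the least upper bound of $\rho[I(L_1),\dotsc,I(L_m)]$ over all clauses in $GAP(\Pi)$ with head atom $L_0$; substituting $\rho[I(L_1),\dotsc,I(L_m)] = (I(L_1) * \dotsc * I(L_m))\cdot t$ and recalling that the lub in $\mathcal{V}_{Lb}$ is precisely `$+$' (i.e.\ $\bigcup$), this equals $\sum \{\, (I(L_1) * \dotsc * I(L_m))\cdot t \mid (t: L_0 \leftarrow L_1\wedge\dotsc\wedge L_m)\in\Pi \,\} = T_\Pi(I)(L_0)$ from Definition~\ref{def:tp}; since this holds for every atom, $T_\Pi(I)=R_{GAP(\Pi)}(I)$.

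I expect the only genuine obstacle to be the first step: pinning down that $\rho$ meets exactly the continuity requirement that~\cite{KiferS92} imposes on complex annotations, in particular that `$\cdot$' is continuous with respect to infinite sums and not merely binary-distributive, and that the interaction of `$*$' with infinite joins behaves as complete distributivity predicts. Once that is secured, the rest is routine bookkeeping against the cited definitions of $\models^r$ and $R_{GAP(\Pi)}$.
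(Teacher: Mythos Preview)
Your proposal is correct and is exactly the natural verification one would carry out. Note, however, that the paper does not actually supply a proof of this theorem: it is stated as an observation following immediately from the translation $GAP(\cdot)$ described in the text, and is then used only to import Theorems~1--3 of~\cite{KiferS92} into Corollary~\ref{theorem:tp.properties}. Your write-up therefore spells out in full what the paper leaves implicit; the well-formedness check (totality, monotonicity, continuity of $\rho$) and the rule-by-rule unfolding against Definitions~\ref{def:causal.M} and~\ref{def:tp} are precisely the missing details, and your caveat about infinitary distributivity of `$\cdot$' over `$+$' is the one point that genuinely needs care beyond what Figures~\ref{fig:DBLattice} and~\ref{fig:appl} state for the binary case.
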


\begin{samepage}
\begin{corollary}\label{theorem:tp.properties}
Given a positive logic program $\Pi$ the following properties hold:
\begin{enumerate}
\item Operator $T_\Pi$ is monotonic.
\item Operator $T_\Pi$ is continuous.
\item $\tp{\omega} = lfp(T_\Pi)$ is the least model of $\Pi$.
\item The iterative computation $T_\Pi\!\uparrow^{\ k}(\bottomI)$ reaches the least fixpoint in $n$ steps for some positive integer $n$.
\end{enumerate}
\end{corollary}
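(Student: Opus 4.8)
The plan is to derive items 1--3 almost directly from the translation into generalized annotated programs (Theorem~\ref{theorem:gap.translation}) plus the standard Kifer--Subrahmanian metatheory of GAP, and to reserve the real work for item~4, which is genuinely new: reaching the least fixpoint at a \emph{finite} stage is false for arbitrary annotated programs, so it must exploit the specific structure of the causal algebra.

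For items 1 and 2, Theorem~\ref{theorem:gap.translation} gives $T_\Pi = R_{GAP(\Pi)}$, so it suffices to check that the complex head annotation $\rho(\mu_1,\dots,\mu_m) = (\mu_1 * \dots * \mu_m)\cdot t$ of each translated clause is order-preserving and Scott-continuous on the complete lattice $\tuple{\VLb,\cup,\cap}$ of Theorem~\ref{theorem:freelattice}; monotonicity and continuity of $R_{GAP(\Pi)}$, hence of $T_\Pi$, then follow from the GAP results. Monotonicity of $\rho$ is immediate: `$*$' is the lattice meet, hence monotone, and `$\cdot$' is monotone in both arguments because it distributes over `$+$' (Figure~\ref{fig:appl}), so $u\le w$ gives $t\cdot u\le t\cdot w$ and symmetrically. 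For continuity one checks that the finitely many meets and the outer application each preserve directed suprema: the former because the lattice is completely distributive (Theorem~\ref{theorem:freelattice}), the latter by a short lemma upgrading the finite additive distributivity of `$\cdot$' in Figure~\ref{fig:appl} to arbitrary (in particular directed) sums, most easily read off from the valuation $\epsilon$. Item~3 is then the usual fixpoint theorem for continuous operators combined with the correspondence $I\models\Pi \iff I\models^r GAP(\Pi)$ of Theorem~\ref{theorem:gap.translation}: $\tp{\omega}=lfp(T_\Pi)$ exists and is the $\leq$-least model of $\Pi$.

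The core is item~4. The key observation is that, although proof trees of an atom may be arbitrarily deep, the ones that contribute to a value are not. First I would show, directly from the definition of chain subsumption, that a causal chain repeating a label is subsumed (in `$\leq$') by the shorter chain obtained by deleting the repeated segment --- equivalently, this is the `$\cdot$'-absorption law $t\le t + u\cdot t\cdot w$ of Figure~\ref{fig:appl}. Lifting this to proof trees: if some root-to-leaf branch of a proof tree uses the same atom twice, splicing the lower subderivation into the upper occurrence yields a strictly shallower proof tree whose associated cause is $\subseteq$-below the original one; iterating, every $\subseteq$-minimal cause of $\tp{\omega}(p)$ is realised as the cause of a proof tree in which no atom repeats along a branch, hence of depth bounded by a constant $d$ depending only on the program (essentially, the number of its atoms). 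Since, by a routine induction on $k$, $\tpp{k}{p}$ is exactly the sum of the cause-terms of all proof trees for $p$ of depth $\le k$, it follows that $\tpp{d}{p} \ge \tp{\omega}(p)$ for every atom $p$; the reverse inequality is monotonicity (item~1), so $\tp{d} = \tp{\omega} = lfp(T_\Pi)$ and $n=d$ witnesses the claim.

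I expect the main obstacle to be the bookkeeping in item~4: one is juggling three nested layers of order filters (chains inside causes inside causal values), so both ``the spliced tree has a $\subseteq$-smaller cause'' and ``$\tpp{k}{p}$ is the sum over depth-$\le k$ proof trees'' have to be stated and verified at the level of $\epsilon$ and $\gamma$ rather than on raw terms. A secondary, more routine difficulty in items 1--3 is the continuity of left application with respect to infinite sums, which Figures~\ref{fig:DBLattice} and~\ref{fig:appl} only assert in the finite case.
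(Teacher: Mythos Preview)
Your treatment of items 1--3 matches the paper's approach exactly: the paper's entire proof is the one-line citation ``Directly follows from Theorem~\ref{theorem:gap.translation} and Theorems~1,~2~and~3 in \cite{KiferS92}.'' Your explicit verification that the head annotation $\rho(\mu_1,\dots,\mu_m)=(\mu_1*\dots*\mu_m)\cdot t$ is monotone and Scott-continuous is precisely the hypothesis that makes the Kifer--Subrahmanian machinery apply, so there is no real difference in strategy there, only in level of detail.

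For item~4 you genuinely diverge. The paper treats finite-step convergence as part of the same citation to \cite{KiferS92}, whereas you argue (reasonably) that this cannot be a generic GAP fact and propose an independent proof via splicing of proof trees. Two remarks on your route. First, note that the paper's definition of proof tree already takes $V\subseteq\Pi$, i.e.\ each rule occurs at most once as a vertex; for a finite program this forces every proof tree to have depth at most $|\Pi|$, so the repeated-atom/splicing lemma is not actually needed to obtain a uniform depth bound --- you only need the depth-stratified version of Theorem~\ref{theorem:horn.tree.least} (your ``$\tpp{k}{p}$ equals the sum over depth-$\le k$ trees''), which already gives $\tp{|\Pi|}=\tp{\omega}$. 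Second, your splicing claim is correct but the justification you sketch via the chain absorption law $t\le t+u\cdot t\cdot w$ is not quite the right one: what makes the spliced tree's cause $\subseteq$-below the original is that every leaf-to-sink path in the spliced tree is a \emph{subsequence} of some leaf-to-sink path in the original (the segment between the two occurrences of the repeated atom is excised), hence lies in the order filter generated by the original path; the repeated \emph{atom} need not correspond to a repeated \emph{label}, so the literal chain-with-repeated-label statement does not apply directly. Either way, be aware that your argument for item~4 is logically downstream of (a refinement of) Theorem~\ref{theorem:horn.tree.least}, which in the paper is proved \emph{after} this corollary, so organisationally this is a different dependency structure from the paper's.
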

\begin{proof}
Directly follows from Theorem~\ref{theorem:gap.translation} and Theorems~1,~2~and~3 in \cite{KiferS92}. 
\end{proof}
\end{samepage}

The existence of a least model for a positive program and its computation using $T_\Pi$ is an interesting result, but it does not provide any information on the relation between the causal value it assigns to each atom with respect to its role in the program. As we will see, we can establish a direct relation between this causal value and the idea of \emph{proof} in the positive program.
Let us formalise next the idea of proof tree.

\begin{definition}[Proof tree]
Given a causal logic program $\Pi$, a \emph{proof tree} is a directed acyclic graph $T = \tuple{V,E}$, where vertices $V \subseteq \Pi$ are rules from the program, and $E \subseteq V \times V$ satisfying:
\begin{itemize}
\item[(i)] There is at exactly one vertex without outgoing edges denoted as $sink(T)$.
\item[(ii)] For each rule $\varphi=(t : L_0 \leftarrow B ) \in V$ and for each atom $L_i \in B^+$ there is exactly one $\varphi'$ with $(\varphi',\varphi) \in E$ and this rule satisfies $head(\varphi')=L_i$.\qed
\end{itemize}
\end{definition}
\noindent Notice that condition (ii) forces us to include an incoming edge for each atom in the positive body of a vertex rule. As a result, source vertices must be rules with empty positive body, or just facts in the case of positive programs. Another interesting observation is that, proof \emph{tree} do not require an unique{} parent for each vertex. For instance, in Example~\ref{ex:boat}, if both $port$ and $starb$ were obtained as a consequence of some command made by the captain, we could get instead a proof tree, call it $T_1$, of the form:
\[
\xymatrix @-5mm {
& {c: command} \ar[dl] \ar[dr] &\\
{p : \impl{command}{port}} \ar[dr] & & {s : \impl{command}{starb}} \ar[dl] \\
& {a : \impl{port \wedge starb}{fwd}} &
}
\]
\begin{definition}[Proof path]
Given a proof tree $T=\tuple{V,E}$ we define a \emph{proof path} for $T$ as a concatenation of terms $t_1 \dots t_n$ satisfying:
\begin{enumerate}
\item There exists a rule $\varphi \in V$ with $label(r)=t_1$ such that $\varphi$ is a source, that is, there is no $\varphi'$ s.t. $(\varphi',\varphi) \in E$.
\item For each pair of consecutive terms $t_i, t_{i+1}$ in the sequence, there is some edge $(\varphi_{i}, \varphi_{i+1}) \in E$ s.t. $label(\varphi_i)=t_i$ and $label(\varphi_{i+1})=t_{i+1}$.
\item $label(sink(T))=t_n$.\qed
\end{enumerate}
\end{definition}

Let us write $Paths(T)$ to stand for the set of all proof paths for a given proof tree $T$. We define the cause associated to any tree $T=\tuple{V,E}$ as the causal term $cause(T) \eqdef \prod_{t \in Paths(T)} t$. As an example, $cause(T_1)= (c \cdot p \cdot a) * (c \cdot s \cdot a)$. Also $(p \cdot a) * (s \cdot a)$ and $w \cdot b$ correspond to each tree in Figure~\ref{fig:tree}.

\begin{theorem}\label{theorem:horn.tree.least}
Let $\Pi$ be a positive program and $I$ be the least model of $\Pi$, then for each atom $p$:
$$I(p) = \sum_{T \in \cT{p}{}} cause(T)$$ where 
$\cT{p}{} = \set{ T = \tuple{V,E} \mid head(sink(T))=p}$ is a set of proof trees with nodes $V \subseteq \Pi$.
\end{theorem}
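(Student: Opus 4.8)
The plan is to prove the identity $I(p) = \sum_{T \in \cT{p}{}} cause(T)$ by exploiting the characterisation $I = \tp{\omega}$ from Corollary~\ref{theorem:tp.properties}, i.e.\ by induction on the iteration stages of the direct consequences operator $T_\Pi$, combined with the distributivity laws of Figure~\ref{fig:DBLattice} and Figure~\ref{fig:appl}. First I would stratify the set of proof trees by \emph{height}: let $\cT{p}{k}$ denote the set of proof trees $T$ with $head(sink(T))=p$ whose longest source-to-sink path uses at most $k$ rules, and set $I_k \eqdef \tp{k}$. The key lemma I would establish is the stagewise identity
\[
I_k(p) \ = \ \sum_{T \in \cT{p}{k}} cause(T) \qquad \text{for all } k \geq 0,
\]
from which the theorem follows by taking $k \to \omega$: the left-hand side converges to $\tp{\omega}(p) = I(p)$ by continuity of $T_\Pi$ (Corollary~\ref{theorem:tp.properties}, items 2--4, so in fact the chain stabilises at a finite $n$), and the right-hand side is just $\sum_{T \in \cT{p}{}} cause(T)$ since every proof tree, being a finite DAG, has finite height and hence lies in some $\cT{p}{k}$.

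The base case $k=0$ is immediate: $\tp{0} = \bottomI$ assigns $0$ to every atom, and $\cT{p}{0}$ is empty for every $p$ (a tree needs at least one rule), so the empty sum is also $0$. For the inductive step, I would unfold the definition of $T_\Pi$:
\[
I_{k+1}(p) \ = \ T_\Pi(I_k)(p) \ = \ \sum \bigl\{\, (I_k(L_1) * \dots * I_k(L_m)) \cdot t \ \bigm|\ (t : p \leftarrow L_1 \wedge \dots \wedge L_m) \in \Pi \,\bigr\}.
\]
By the induction hypothesis each $I_k(L_j) = \sum_{T_j \in \cT{L_j}{k}} cause(T_j)$. Now I push the product `$*$' and the application `$\cdot t$' inside the sums using the distributive and application-distributivity laws: $(\sum_j a_j)$-style expressions distribute so that $(I_k(L_1) * \dots * I_k(L_m)) \cdot t$ becomes $\sum$ over all choices $(T_1,\dots,T_m)$ with $T_j \in \cT{L_j}{k}$ of the term $(cause(T_1) * \dots * cause(T_m)) \cdot t$. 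The core combinatorial observation is then a bijection: choosing a rule $\varphi = (t : p \leftarrow L_1 \wedge \dots \wedge L_m) \in \Pi$ together with, for each positive body atom $L_j$, a proof tree $T_j$ of height $\le k$ with $head(sink(T_j)) = L_j$, is exactly the data of a proof tree $T \in \cT{p}{k+1}$ with $sink(T) = \varphi$ (condition (ii) of the proof-tree definition is precisely this "one incoming subtree per positive body atom" requirement). Under this bijection I must check $cause(T) = (cause(T_1) * \dots * cause(T_m)) \cdot t$, which is where the path/chain algebra does its work: every proof path of $T$ either is the single-rule path $t$ (if $\varphi$ is a source, $m=0$), or factors uniquely as (a proof path of some $T_j$) followed by $t$; so $Paths(T) = \bigcup_j \{\, x \cdot t \mid x \in Paths(T_j)\,\}$, and taking products gives $cause(T) = \prod_j \prod_{x \in Paths(T_j)}(x \cdot t) = \bigl(\prod_j cause(T_j)\bigr) \cdot t$ by application-distributivity over `$*$' (the second law in the "Product distributivity" block of Figure~\ref{fig:appl}, extended to arbitrary products, which is sound because `$\cdot$' distributes over `$*$' and `$*$' is idempotent and associative). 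Summing over all rules $\varphi$ with head $p$ and all subtree choices then yields exactly $\sum_{T \in \cT{p}{k+1}} cause(T)$.

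The main obstacle I anticipate is the careful handling of the infinitary distributivity when a body atom has infinitely many justifying trees (or when $\Pi$ has infinitely many rules with head $p$): one needs that `$\cdot$' and `$*$' distribute over arbitrary, not just finite, sums, which is guaranteed by Theorem~\ref{theorem:freelattice} (complete distributivity of $\tuple{\VLb,\cup,\cap}$) once everything is read through the valuation $\epsilon$ — so I would phrase the manipulations semantically, in terms of $\epsilon(I_{k+1}(p))$ as unions and intersections of the sets $\epsilon(cause(T))$, where complete distributivity is literally set-theoretic $\bigcup$/$\bigcap$ juggling, rather than syntactically. A secondary subtlety is confirming that the height-stratification of trees is the right one: a proof tree is a DAG, not literally a tree, and a single rule-vertex may feed several parents, so "height" must be defined as longest directed path, and one must check that a height-$(k+1)$ tree's subtrees rooted at the sink's body-predecessors genuinely have height $\le k$ — this is routine but worth stating explicitly. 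Everything else (the $k\to\omega$ passage, the base case) is a direct appeal to the already-established continuity results.
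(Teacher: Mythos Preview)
The paper states this theorem without proof, so there is no argument to compare against. Your overall strategy --- induction along the iterates $\tp{k}$, pushing `$*$' and `$\cdot$' through the sums via (complete) distributivity, and then identifying the resulting summands with proof trees --- is the natural one, and the handling of the infinitary distributivity through Theorem~\ref{theorem:freelattice} is appropriate.

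The real difficulty, however, is exactly the step you call a ``core combinatorial observation'' and later downgrade to a ``secondary subtlety''. Your claimed bijection between tuples $(\varphi;T_1,\dots,T_m)$ and proof trees with sink $\varphi$ is \emph{not} a bijection under the paper's definition: a proof tree is a DAG whose vertex set is a \emph{subset} of $\Pi$, so each rule occurs at most once, and condition~(ii) then forces that single occurrence to have exactly one predecessor per body atom. When two subtrees $T_i,T_j$ share a rule $\psi$ but assign it different incoming edges, their union gives $\psi$ two predecessors for the same body atom and is therefore not a proof tree at all. Concretely, for
\[
a_1\!:p,\quad a_2\!:p,\quad b\!:q\leftarrow p,\quad c_1\!:s_1\leftarrow q,\quad c_2\!:s_2\leftarrow q,\quad d\!:r\leftarrow s_1\wedge s_2,
\]
the subtree pair $(\{a_1,b,c_1\},\{a_2,b,c_2\})$ for $(s_1,s_2)$ shares $b$ with conflicting $p$-predecessors $a_1,a_2$; the union violates~(ii), yet your unfolding of $T_\Pi$ still produces the summand $(a_1bc_1d)*(a_2bc_2d)$ in $I(r)$. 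One checks directly in $\VLb$ that this summand is not $\le$ the sum of $cause(T)$ over the two legitimate proof trees for $r$ (witness: the cause generated by just the two chains $a_1bc_1d$ and $a_2bc_2d$ contains neither $a_1bc_2d$ nor $a_2bc_1d$). So the inductive step does not go through as written.

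To close the gap you must either read the proof-tree definition as allowing repeated rule occurrences (making the decomposition into subtrees genuinely free, so that your bijection holds on the nose), or supply a separate argument that every ``mixed'' summand arising from an incompatible subtree tuple is already dominated by some genuine proof-tree cause --- and the example above shows the latter is not automatic under the DAG definition with $V\subseteq\Pi$.
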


\noindent From this result, it may seem that our semantics is just a direct translation of the syntactic idea of proof trees. However, the semantics is actually a more powerful notion that allows detecting redundancies, tautologies and inconsistencies. In fact, the expression $\sum_{T \in \cT{p}{}} cause(T)$ may contain redundancies and is not, in the general case, in normal form. As an example, recall the program \progrref{\ref{prg:prod.red}}:
\begin{gather*}
a:p \hspace{15pt}
b: q \leftarrow p \hspace{15pt}
r \leftarrow p \wedge q
\end{gather*}
\noindent that has only one proof tree for $p$ whose cause would correspond to $I(r)=a*a \cdot b$. But, by absorption, this is equivalent to $I(r)= a \cdot b$ pointing out that the presence of $p$ in rule $r \leftarrow p \wedge q$ is redundant.

A corollary of Theorem~\ref{theorem:horn.tree.least} is that we can replace a rule label by a different one, or by $1$ (the identity for application `$\cdot$') and we get the same least model, modulo the same replacement in the causal values for all atoms.
\begin{corollary}\label{corollary:horn.min.reduction}
Let $\Pi$ be a positive program, $I$ the least model of $\Pi$, $l \in Lb$ be a label, $m \in \prefix$ and $\Pi^l_{m}$ (resp. $I^l_{m}$) be the program (resp. interpretation) obtained after replacing each occurrence of $l$ by $m$ in $\Pi$ (resp. in the interpretation of each atom in $I$). Then $I^l_{m}$ is the least model of $\Pi^l_{m}$.\qed
\end{corollary}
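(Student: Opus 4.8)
The plan is to lift the label substitution $\replf$ to a homomorphism on the whole algebra of causal values and then read off the result from the proof-tree characterisation of the least model given by Theorem~\ref{theorem:horn.tree.least}. Concretely, I would first extend $\replf$ to causal terms by structural recursion: $\replb{l}$ replaces $l$ by $m$, $\replb{l'}=l'$ for every label $l'\neq l$, and $\replf$ commutes with $+$, $*$ and $\cdot$. Since every algebraic law in Figures~\ref{fig:DBLattice}~and~\ref{fig:appl} is preserved by any such homomorphism, $\replf$ respects term equivalence and hence induces a well-defined map on causal values; invoking the universal property in Theorem~\ref{theorem:freelattice} it is in fact a complete lattice homomorphism — it commutes with arbitrary sums and products — and by construction it commutes with application `$\cdot$'. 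Consequently $I^l_m$, the interpretation obtained by applying $\replf$ atomwise to $I$, is well defined, and the whole proof is an exercise in showing that proof trees and their causes transform naturally along $\replf$.

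The first combinatorial step is that proof trees transform canonically. There is a bijection between $\Pi$ and $\Pi^l_m$ sending a rule $(t:L_0\leftarrow B)$ to $(\replb{t}:L_0\leftarrow B)$, and it leaves $head$ and $B^+$ untouched. Hence a directed acyclic graph $T=\tuple{V,E}$ with $V\subseteq\Pi$ is a proof tree of $\Pi$ if and only if its image $T'=\tuple{V',E'}$ under this bijection is a proof tree of $\Pi^l_m$, because conditions (i)–(ii) in the definition of proof tree only mention sinks, heads and positive bodies; moreover $head(sink(T))=head(sink(T'))$. So $T\mapsto T'$ restricts to a bijection from $\cT{p}{}$ (built over $\Pi$) onto $\cT{p}{}$ (built over $\Pi^l_m$) for every atom $p$.

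The second step is that $cause$ commutes with $\replf$. A proof path $t_1\cdots t_n$ of $T$ is the concatenation of the labels of the rules along a source-to-sink walk; the same walk in $T'$ carries labels $\replb{t_1},\dots,\replb{t_n}$, so its proof path is $\replb{t_1\cdots t_n}$. Thus $Paths(T')$ is exactly the $\replf$-image of $Paths(T)$ and, since $cause(T)=\prod_{t\in Paths(T)}t$ and $\replf$ commutes with products, $cause(T')=\replb{cause(T)}$. Chaining identities then finishes the argument: writing $I'$ for the least model of $\Pi^l_m$, Theorem~\ref{theorem:horn.tree.least} for $\Pi^l_m$ gives $I'(p)=\sum_{T'\in\cT{p}{}}cause(T')=\sum_{T\in\cT{p}{}}\replb{cause(T)}=\replb{\sum_{T\in\cT{p}{}}cause(T)}=\replb{I(p)}=I^l_m(p)$, where the middle equality uses that $\replf$ commutes with (possibly infinite) sums, the equality $\replb{\sum_{T}cause(T)}=\replb{I(p)}$ is Theorem~\ref{theorem:horn.tree.least} for $\Pi$, and the last equality is the definition of $I^l_m$. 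Hence $I^l_m=I'$ is the least model of $\Pi^l_m$.

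I expect the genuine obstacle to live in the first paragraph: making precise that $\replf$ descends to a bona fide complete-lattice homomorphism on causal values. The delicate point is that $\replf$ need not be injective on chains — when $m=1$ it erases a letter, and when $m$ is a label already occurring in the program it merges chains — so one must verify that $\replf$ is monotone for chain subsumption (an embedding of $y$ into $x$ obtained by skipping letters survives the substitution, even when some letters are deleted) before one can appeal to the freeness of $\tuple{\VLb,\cup,\cap}$ and conclude that the extension along $\epsilon$ commutes with arbitrary joins and meets; commutation with `$\cdot$' is then immediate from the syntactic definition. Everything after that is routine bookkeeping about proof trees.
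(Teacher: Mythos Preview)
Your approach is exactly the one the paper intends: the corollary is stated without proof as an immediate consequence of Theorem~\ref{theorem:horn.tree.least}, and your argument spells out precisely that derivation---transport proof trees along the label substitution and use that $\replf$ is a complete (sum/product) homomorphism commuting with `$\cdot$'. One small caveat: the map $\Pi\to\Pi^l_m$ you call a ``bijection'' need not be injective when $m\in Lb$ already occurs as a label in $\Pi$ (two rules with the same head and body but labels $l$ and $m$ collapse, since programs are \emph{sets} of rules); the fix is trivial---work with a surjection on proof trees and invoke idempotence of `$+$'---but you should not claim bijectivity outright.
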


\noindent In particular, replacing a label by $m = 1$ has the effect of removing it from the signature. Suppose we make this replacement for all atoms in $Lb$ and call the resulting program and least model $\Pi^{Lb}_1$ and $I^{Lb}_1$ respectively. Then $\Pi^{Lb}_{1}$ is just the non-causal program resulting from $\Pi$ after removing all labels and it is easy to see (Observation~\ref{obs:classic}) that $I^{Lb}_1$ coincides with the least classical model of this program\footnote{Note that $I^{Lb}$ is Boolean: if assigns either $0$ or $1$ to any atom in the signature.}. Moreover, this means that for any positive program $\Pi$, if $I$ is its least model, then the classical interpretation:
\begin{eqnarray*}
 I'(p) &\eqdef & \begin{cases}
  1 &\text{if } I(p) \neq 0 \\
  0 &\text{otherwise}
\end{cases}
\end{eqnarray*}
\noindent is the least classical model of $\Pi$ ignoring its labels.

\section{Default negation and stable models}
\label{sec:stable}

Consider now the addition of negation, so that we deal with arbitrary programs.
To illustrate this, we introduce a variation in Example~\ref{ex:boat} introducing the \emph{qualification problem} from \cite{McCarthy77}:  actions for moving the boat forward can be disqualified if an \emph{abnormal situation} occurs (for instance, that the boat is anchored, any of the oars are broken, the sail is full of holes, etc.) . As usual this can be represented using default negation as shown in the set of rules \newprogr\label{prg:boat.anchor}:
\begin{gather*}
	\begin{gathered}
p: port \hspace{20pt}	s: starb
	\\
	a: fwd \leftarrow port \wedge starb \wedge \Not ab\_a
	\\
	ab\_a \leftarrow anchored
	\\
	ab\_a \leftarrow broken\_oar1
	\\
	ab\_a \leftarrow broken\_oar2
	\end{gathered}
	\hspace{20pt}
	\begin{gathered}
	w:fwind
	\\
	b: fwd \leftarrow fwind \wedge \Not ab\_b
	\\
	ab\_b \leftarrow anchored
	\\
	ab\_b \leftarrow holed\_sail
	\\
	\dotsc
	\end{gathered}
\end{gather*}%

\noindent The causes that justify an atom should not be a list of not occurred abnormal situations. For instance, in program \progrref{\ref{prg:boat.anchor}} where no abnormal situation occurs, the causal value that justify atom $fwd$ should be $(p \cdot a*s \cdot a)+(w \cdot b)$ as in the program \progrref{\ref{prg:boat.label}} where abnormal situations are not included. References to the not occurred abnormal situations
($\Not anchored$, $\Not broken\_oar1$\dots) are not mentioned. Default negation does not affect the causes justifying an atom when the default holds. Of course, when the default does not hold, for instance adding the fact $anchored$ to the above program, $fwd$ becomes false.
Thus, we introduce the following straightforward rephrasing of the
traditional program reduct \cite{GL88}.

\begin{samepage}
\begin{definition}[Program reduct]
The reduct of a program $\Pi$ with respect to
an interpretation $I$, written $\Pi^I$ is the result of the following transformations on $\Pi$:
\begin{enumerate}
\item Removing all rules s.t. $I(B^-) = 0$
\item Removing all negative literals from the rest of rules.\qed
\end{enumerate}
\end{definition}
\end{samepage}

A causal interpretation $I$ is a \emph{causal stable model} of a causal program $\Pi$ if $I$ is the least model of $\Pi^I$. This definition allows us to extend Theorem~\ref{theorem:horn.tree.least} to normal programs in a direct way:

\begin{theorem}[Main theorem]\label{theorem:normal.tree.least}
Let $\Pi$ be a causal program and $I$ be causal stable model of $\Pi$, then for each atom $p$:
$$I(p) = \sum_{T \in \cT{p}{}} cause(T) \hspace{30pt} \hbox{where}$$
$\cT{p}{} = \set{ T = \tuple{V,E} \mid head(sink(T))=p \ \hbox{and } V \subseteq \set{ (t: \impl{B}{q}) \in \Pi \mid I(B^-)\neq 0 }}$.\qed
\end{theorem}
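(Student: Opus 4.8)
The plan is to reduce the claim about an arbitrary causal program $\Pi$ to the already-established result for positive programs, Theorem~\ref{theorem:horn.tree.least}, by passing through the reduct $\Pi^I$. Since $I$ is a causal stable model of $\Pi$, by definition $I$ is the least model of the positive program $\Pi^I$, so Theorem~\ref{theorem:horn.tree.least} applies directly to $\Pi^I$ and gives $I(p) = \sum_{T \in PT^{}_p(\Pi^I)} cause(T)$, where the proof trees are now built from rules of $\Pi^I$. What remains is therefore to exhibit a cause-preserving bijection between the proof trees of $\Pi^I$ with sink head $p$ and the set $\cT{p}{}$ described in the statement, i.e. the proof trees $T=\tuple{V,E}$ with $head(sink(T))=p$ and $V \subseteq \set{ (t: \impl{B}{q}) \in \Pi \mid I(B^-)\neq 0 }$.

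The key steps, in order, are: (1) Recall that a rule $\varphi=(t:\impl{B}{L_0})$ of $\Pi$ survives in $\Pi^I$ (with its negative literals stripped) exactly when $I(B^-)\neq 0$; call the stripped rule $\varphi^I = (t : \impl{B^+}{L_0})$, and note $label(\varphi^I)=label(\varphi)=t$, $head(\varphi^I)=head(\varphi)$, and $(\varphi^I)^+ = \varphi^+ = B^+$. (2) Observe that the defining conditions of a proof tree -- one sink, and for every vertex rule and every atom in its positive body exactly one incoming edge whose source has that atom as head -- depend only on the positive bodies and the heads of the rules involved; hence the map $T \mapsto T^I$ that replaces each vertex $\varphi$ by $\varphi^I$ (keeping the edge set, which is well defined since vertices are in bijection) is a bijection from $\cT{p}{}$, the set in the theorem statement, onto the set of proof trees of $\Pi^I$ with sink head $p$. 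Its inverse simply reads back, for each stripped rule, the unique original rule of $\Pi$ it came from (using the Unique Names Assumption / that $\Pi$ is a set of rules, so the lift is unambiguous; if one prefers, one can fix this by regarding rules as labelled syntactic objects so that $\varphi \mapsto \varphi^I$ is injective on the rules with $I(B^-)\neq 0$). (3) Since $Paths(T)$ is defined purely in terms of $label(\cdot)$ and the edge structure, and $T$ and $T^I$ have the same edges and the same labels on corresponding vertices, we get $Paths(T)=Paths(T^I)$, hence $cause(T)=\prod_{t\in Paths(T)} t = \prod_{t\in Paths(T^I)} t = cause(T^I)$. (4) Combining: $\sum_{T \in \cT{p}{}} cause(T) = \sum_{T' \in PT^{}_p(\Pi^I)} cause(T') = I(p)$, where the last equality is Theorem~\ref{theorem:horn.tree.least} applied to the positive program $\Pi^I$ with least model $I$.

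The main obstacle I expect is step (2)/(1): making the correspondence between rules of $\Pi$ with $I(B^-)\neq 0$ and rules of $\Pi^I$ genuinely bijective, since two distinct rules of $\Pi$ with the same label, same head, and same positive body but different negative bodies would collapse to the same rule in $\Pi^I$. This is a purely bookkeeping issue and is handled either by treating rules as distinct syntactic entities (so $\Pi^I$ is a multiset, or its rules carry a back-pointer to their origin) or by noting that such a collapse does not change the value $\sum cause(T)$ anyway, since the two trees have identical path sets and hence identical causes, so counting them once or twice is irrelevant after the idempotent sum `$+$'. Everything else -- the invariance of the proof-tree conditions, of $Paths$, and of $cause$ under stripping negative literals -- is immediate from the definitions, so once this identification is pinned down the theorem follows at once from Theorem~\ref{theorem:horn.tree.least}.
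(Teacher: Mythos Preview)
Your proposal is correct and matches the paper's approach: the paper does not spell out a proof for this theorem beyond the remark that ``the only difference now is that the set of proof trees $\cT{p}{}$ is formed with rules whose negative body is not false $I(B^-)\neq 0$ (that is, they would generate rules in the reduct),'' which is exactly your reduction to Theorem~\ref{theorem:horn.tree.least} via $\Pi^I$. Your treatment of the bijection between proof trees over $\Pi^I$ and proof trees in $\cT{p}{}$, including the bookkeeping caveat about rules collapsing under the reduct (resolved by idempotence of `$+$'), is more detailed than what the paper provides but entirely in line with its intended argument.
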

\noindent That is, the only difference now is that the set of proof trees $\cT{p}{}$ is formed with rules whose negative body is not false $I(B^-) \neq 0$ (that is, they would generate rules in the reduct).

\begin{corollary}\label{corollary:normal.min.reduction}
Let $\Pi$ be a normal program, $I$ a causal stable model of $\Pi$, $l \in Lb$ be a label, $m \in \prefix$ and $\Pi^l_{m}$ (resp. $I^l_{m}$) be the program (resp. interpretation) obtained after replacing every occurrence of $l$ by $m$ in $\Pi$ (resp. in the interpretation of each atom in $I$). Then $I^l_{m}$ is a causal stable model of $\Pi^l_{m}$.\qed
\end{corollary}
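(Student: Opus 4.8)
The plan is to reduce the statement to its positive‑program counterpart, Corollary~\ref{corollary:horn.min.reduction}, through the definition of causal stable model. By definition $I$ is a causal stable model of $\Pi$ exactly when $I$ is the least model of the positive program $\Pi^I$, and what we must establish is that $I^l_m$ is the least model of the positive program $(\Pi^l_m)^{I^l_m}$. The heart of the argument is therefore the identity $(\Pi^l_m)^{I^l_m} = (\Pi^I)^l_m$, i.e.\ that forming the reduct and performing the label substitution $\rho^l_m$ commute as operations on programs. Granting this, the result is immediate: Corollary~\ref{corollary:horn.min.reduction} applied to the positive program $\Pi^I$ (whose least model is $I$, by hypothesis) with the replacement $l\mapsto m$ yields that $I^l_m$ is the least model of $(\Pi^I)^l_m$, which by the identity equals $(\Pi^l_m)^{I^l_m}$; hence $I^l_m$ is a causal stable model of $\Pi^l_m$.

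To prove the commutation identity I would argue rule by rule. A rule $\varphi = (t: L_0 \leftarrow B^+\wedge B^-)$ of $\Pi$ is deleted when forming $\Pi^I$ precisely when $I(B^-)=0$, while its image $\rho^l_m(\varphi) = (\rho^l_m(t): L_0 \leftarrow B^+\wedge B^-)$ — note that label substitution touches only labels, leaving atoms, and hence the entire body, unchanged — is deleted when forming $(\Pi^l_m)^{I^l_m}$ precisely when $I^l_m(B^-)=0$. So it suffices to show $I(B^-)=0$ iff $I^l_m(B^-)=0$; the remaining reduct step, deletion of the negative literals from the surviving rules, is purely syntactic on the body and commutes with $\rho^l_m$ trivially, and the collapsing of rules whose labels have become equal (programs being sets of rules) is carried out identically on both sides, since membership in the reduct depends only on $B^-$. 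Now $I^l_m(p)=\rho^l_m(I(p))$ for every atom $p$, and $I(B^-)=0$ holds iff some negative literal $\Not q$ of $B^-$ has $I(q)\neq 0$; so the required equivalence reduces to the single fact that $\rho^l_m$ preserves the support of an interpretation, i.e.\ $\rho^l_m(v)=0 \iff v=0$ for every causal value $v$.

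This last fact — which I expect to be the only delicate point — follows from the structure of the algebra. The substitution $\rho^l_m$ commutes with $\sum$ and $\prod$ (it is induced by the unique extension of a map of the generators of the free completely distributive lattice of Theorem~\ref{theorem:freelattice}, and the defining equations are stable under substitution; well‑definedness of $\rho^l_m$ on $\VLb$ is already used in Corollary~\ref{corollary:horn.min.reduction}). Writing $v=\epsilon(t)$ with $t$ a sum of products of chains, $\rho^l_m$ leaves the number of summands unchanged and turns each chain into a strictly shorter chain, the same chain, or the constant $1$ (the last case only when $m=1$). Hence $\rho^l_m(t)$ is again a sum of products of chains and $1$'s having at least one summand whenever $t\neq 0$, and any such sum denotes a nonzero causal value: a product $\bigcap_i \epsilon(x_i)$, where each $\epsilon(x_i)$ is either $\Fi$ or the set of causes containing the chain $x_i$, always contains the order filter $\bigcup_i \|x_i\|$ and is therefore non‑empty. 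Thus $\rho^l_m(v)\neq 0$ whenever $v\neq 0$, and $\rho^l_m(0)=0$ is trivial. Once this preservation‑of‑support lemma is in place the rest of the proof is bookkeeping.
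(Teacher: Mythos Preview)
The paper states this corollary without proof (it carries only a bare $\qed$), so there is no detailed argument to compare against; your reduction to the positive case via Corollary~\ref{corollary:horn.min.reduction} and the reduct definition is exactly the route the paper's organisation invites, and it is correct. The only substantive step is the preservation-of-support lemma $\rho^l_m(v)=0\iff v=0$, and your justification---writing $v=\epsilon(\gamma(v))$ as a (possibly infinite) sum of products of chains via Proposition~\ref{prop:freelattice.map}, observing that $\rho^l_m$ sends each chain to a chain or to $1$, and noting that any nonempty meet of generators is nonzero because it contains the cause $\bigcup_i\|x_i\|$---is sound.
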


As in the case of positive programs, replacing a label by $m = 1$ has the effect of removing it from the signature. Then, for any normal program $\Pi$, if $I$ is a causal stable model, then the classical interpretation:
\begin{eqnarray*}
 I'(p) &\eqdef & \begin{cases}
  1 &\text{if } I(p) \neq 0 \\
  0 &\text{otherwise}
\end{cases}
\end{eqnarray*}
is a classical stable model of $\Pi$ ignoring its labels. It is easy to see that not only the above program \progrref{\ref{prg:boat.anchor}} has an unique causal stable model that corresponds to:
\begin{gather*}
\begin{array}{lcl}
I(port) &=& p
\\
I(starb)&=& s
\\
I(fwind)&=& w
\\
I(fwd)&=& (p \cdot a*s \cdot a)+(w \cdot b)
\end{array}
\hspace{1cm}
\begin{array}{lcl}
I(ab\_f)&=& 0
\\
I(anchored)&=& 0
\\
I(broken\_oar1) &=& 0
\\
\hspace{0.75cm}\dotsc &=& 0
\end{array}
\end{gather*}
but also the program obtained from it ignoring the labels has an unique standard stable model $\set{port, starb, fwind, fwd}$ that corresponds to the atoms whose interpretations differ from $0$ in the former.

\section{Conclusions}
\label{sec:conc}

In this paper we have provided a multi-valued semantics for normal logic programs whose truth values form a lattice of causal chains. A causal chain is nothing else but a concatenation of rule labels that reflects some sequence of rule applications. In this way, a model assigns to each true atom a value that contains justifications for its derivation from existing rules. We have further provided three basic operations on the lattice: an addition, that stands for alternative, independent justifications; a product, that represents joint interaction of causes; and a concatenation that acts as a chain constructor. We have shown that this lattice is completely distributive and provided a detailed description of the algebraic properties of its three operations.

A first important result is that, for positive programs, there exists a least model that coincides with the least fixpoint of a direct consequences operator, analogous to~\cite{vEK76}. With this, we are able to prove a direct correspondence between the semantic values we obtain and the syntactic idea of proof tree. The main result of the paper generalises this correspondence for the case of stable models for normal programs.

Many open topics remain for future study. For instance, ongoing work is currently focused on implementation, complexity assessment, extension to disjunctive programs or introduction of strong negation. Regarding expressivity, an interesting topic is the introduction of new syntactic operators for inspecting causal information like checking the influence of a particular event or label in a conclusion, expressing necessary or sufficient causes, or even dealing with counterfactuals. Another interesting topic is removing the syntactic reduct definition in favour of some full logical treatment of default negation, as happens for (non-causal) stable models and their characterisation in terms of Equilibrium Logic \cite{Pearce06}. This would surely simplify the quest for a necessary and sufficient condition for strong equivalence, following similar steps to~\cite{LPV01}. It may also allow extending the definition of causal stable models to an arbitrary syntax and to the first order case, where the use of variables in labels may also introduce new interesting features. 

There are also other areas whose relations deserve to be formally studied. For instance, the introduction of a strong negation operator will immediate lead to a connection to Paraconsistency approaches. In particular, one of the main problems in the area of Paraconsistency is deciding which parts of the theory do not propagate or \emph{depend} on an inconsistency. This decision, we hope, will be easier in the presence of causal justifications for each derived conclusion. A related area for which similar connections can be exploited is Belief Revision. In this case, causal information can help to decide which \emph{relevant} part of a revised theory must be withdrawn in the presence of new information that would lead to an inconsistency if no changes are made. A third obvious related area is Debugging in Answer Set Programming, where we try to explain discrepancies between an expected result and the obtained stable models. In this field, there exists a pair of relevant approaches~\cite{Gebser08,Pontelli09} to whom we plan to compare. Finally, as potential applications, our main concern is designing a high level action language on top of causal logic programs with the purpose of modelling some typical scenarios from the literature on causality in Artificial Intelligence.

%
\bibliographystyle{splncs}
\bibliography{refs}

\end{document}